\documentclass[11pt]{article}
\usepackage[margin=1in]{geometry}
\usepackage[authoryear,round]{natbib}
\usepackage[colorlinks=true,linkcolor=blue,citecolor=blue,urlcolor=blue]{hyperref}
\usepackage{graphicx}
\usepackage{amsmath,amssymb,amsthm}
\newtheorem{proposition}{Proposition}[section]
\usepackage{url}
\usepackage{bm}
\usepackage{booktabs}
\usepackage{array}
\usepackage{longtable}
\usepackage[table]{xcolor}
\usepackage[T1]{fontenc}
\usepackage{lmodern}
\usepackage{textcomp}
\usepackage{pgfplots}
\pgfplotsset{compat=1.18}
\usepackage{tikz}
\usetikzlibrary{arrows.meta, decorations.pathreplacing}

\newcommand{\papertitle}{Understanding When Poisson Log-Normal Models Outperform Penalized Poisson Regression for Microbiome Count Data}

\begin{document}

\begin{center}
{\LARGE\bfseries \papertitle\par}
\vspace{1em}
{\large Daniel Agyapong$^{1}$, Julien Chiquet$^{2}$, Jane Marks$^{3}$, Toby Dylan Hocking$^{4}$\par}
\vspace{0.5em}
{\small $^{1}$School of Informatics, Computing, and Cyber Systems, Northern Arizona University, Flagstaff, Arizona, U.S.A.\par}
{\small $^{2}$UMR MIA Paris-Saclay, Universit\'{e} Paris-Saclay, AgroParisTech, INRAE, Palaiseau, France\par}
{\small $^{3}$Department of Biological Sciences, Northern Arizona University, Flagstaff, Arizona, U.S.A.\par}
{\small $^{4}$D\'{e}partement d'informatique, Universit\'{e} de Sherbrooke, Sherbrooke, Qu\'{e}bec, Canada\par}
\vspace{0.4em}
{\small Corresponding author: \texttt{da2343@nau.edu}\par}
\end{center}
\vspace{0.8em}
\begin{abstract}
Multivariate count models are often justified by their ability to capture latent dependence, but researchers receive little guidance on when this added structure improves on simpler penalized marginal Poisson regression.
We study this question using real microbiome data under a unified held-out evaluation framework.
For count prediction, we compare PLN and GLMNet(Poisson) on 20 datasets spanning 32 to 18,270 samples and 24 to 257 taxa, using held-out Poisson deviance under leave-one-taxon-out prediction with 3-fold sample cross-validation rather than synthetic or in-sample criteria.
For network inference, we compare PLNNetwork and GLMNet(Poisson) neighborhood selection on five publicly available datasets with experimentally validated microbial interaction truth.
PLN outperforms GLMNet(Poisson) on most count-prediction datasets, with gains up to 38 percent.
The primary predictor of the winner is the sample-to-taxon ratio, with mean absolute correlation as the strongest secondary signal and overdispersion as an additional predictor.
PLNNetwork performs best on broad undirected interaction benchmarks, whereas GLMNet(Poisson) is better aligned with local or directional effects.
Taken together, these results provide guidance for choosing between latent multivariate count models and penalized Poisson regression in biological count prediction and interaction recovery.
\end{abstract}

\noindent\textbf{Keywords:} Benchmarking; microbiome; network inference; penalized regression; Poisson log-normal.

\section{Introduction}
Microbial communities profiled by Next-Generation Sequencing produce count data with distinctive statistical properties.
Observations are nonnegative integers, contain many zeros, and are strongly overdispersed.   
Moreover, the number of taxa $D$ routinely exceeds the number of biological samples $N$~\citep{Kodikara2022}.
These properties jointly challenge the assumptions of standard Gaussian models and limit the use of conventional Poisson regression, which cannot accommodate residual correlations among taxa or the extra-Poisson variability that arises from unobserved biological heterogeneity~\citep{mcmurdie2014waste, Chiquet2021}.

The Poisson Log-Normal (PLN) model addresses these limitations by introducing a latent Gaussian layer that captures both overdispersion and dependence among taxa within a principled probabilistic framework~\citep{aitchison1989multivariate, Chiquet2021}.
This latent structure supports a family of downstream analyses, including conditional prediction, dimensionality reduction, and the recovery of sparse ecological interaction networks through graphical lasso regularization~\citep{chiquet2019variational}.
The PLN family is fitted using variational inference and has been increasingly applied in microbiome studies that require explicit modeling of taxon dependencies~\citep{Chiquet2021, SUBEDI20251255}.

Despite this appeal, PLN models carry a practical cost.
Estimating a full $D \times D$ latent covariance matrix scales cubically in $D$, making PLN computationally demanding for the large, sparse count tables typical of modern microbiome surveys.
By contrast, penalized Poisson regression implemented efficiently in \texttt{glmnet}~\citep{Friedman2010} fits each taxon independently using $\ell_1$ regularization, scales near-linearly in $D$, and remains tractable even when $D$ runs into the thousands.
The fundamental question motivating this work is therefore empirical.
Under what conditions do PLN's added modeling complexity translate into measurable gains over penalized Poisson regression?

This question has direct practical consequences.
A researcher analyzing a dataset with $N = 50$ samples and $D = 200$ taxa faces a genuine modeling choice between investing in PLN's joint estimation at computational cost and relying on the faster penalized regression baseline.
Existing literature provides limited guidance.
Prior evaluations of PLN variants have largely focused on illustrative case studies, synthetic data, or narrow analytical tasks, assessing goodness of fit on training data rather than held-out predictive performance~\citep{Chiquet2021, Batardiere2025ZIPLN, Chaussard2025PLNTree}.
As summarized in Table~\ref{tab:related_work}, systematic comparisons against penalized regression baselines on real microbiome data under a unified and reproducible evaluation protocol remain absent from the literature.
Prior work has therefore not provided a held-out comparison built entirely from real microbiome count tables rather than synthetic benchmarks supplemented by a few illustrative applications.
This gap is not merely a matter of coverage: in-sample criteria such as ELBO, BIC, and AIC measure how well a model fits the data it was trained on, not whether its latent structure captures signal that transfers to unseen observations.
A model that overfits the latent covariance can appear superior by in-sample criteria while actually performing worse out of sample.
The absence of held-out evaluation has therefore left the practical question, when does PLN's complexity pay off?

A similar gap exists for network inference.
PLNNetwork recovers ecological interaction networks by estimating a sparse latent precision matrix~\citep{chiquet2019variational}, while neighbourhood selection through \texttt{GLMNet(Poisson)} offers an alternative route to network recovery without a joint model~\citep{Meinshausen2006, Friedman2010}.
To our knowledge, no study has evaluated these competing approaches against experimentally validated microbial interactions, which are necessary for a principled comparison of edge recovery performance.

In this paper, we study the comparative use of joint latent count models and penalized marginal Poisson regression under a common held-out evaluation framework.
Our goal is to provide practical guidance on when the added complexity of a latent multivariate model is justified for count prediction and interaction recovery.
Figure~\ref{fig:pipeline} summarizes the shift from earlier in-sample evaluation practice to the held-out count-prediction protocol used in this study.

The remainder of the paper is organized as follows.
Section~2 reviews related work and situates this study within the existing benchmark literature.
Section~3 describes the algorithms, datasets, and evaluation protocol.
Section~4 presents results for prediction and network inference.
Section~5 provides a discussion of the results, implications, and limitations.
Section~6 concludes.

\section{Related Work}

Microbiome abundance data poses statistical challenges that motivate specialized count models; this section reviews the modelling choices that underlie PLN and the benchmarks that have evaluated it.
Table~\ref{tab:related_work} summarizes how prior studies approach evaluation.

\paragraph{Count data properties of microbiome surveys.}
Microbiome surveys via 16S rRNA amplicon sequencing or shotgun metagenomics produce read-count tables of non-negative integers whose observed values depend on the total number of reads sequenced per sample~\citep{HMP2012, Thompson2017, Xia2023}.
Three properties of these counts make standard Gaussian regression inappropriate, and the latter two also show that plain Poisson regression is insufficient, motivating latent-variable extensions such as the Poisson Log-Normal.
First, \emph{compositionality}: total read depth (library size) varies across samples by orders of magnitude and carries no biological information; models that ignore this offset confound technical sequencing variation with biological signal~\citep{mcmurdie2014waste}.
Second, \emph{overdispersion}: counts exhibit variance far exceeding the Poisson mean, as genuine biological variation across individuals and environments inflates count variance beyond what the Poisson distribution allows~\citep{zhang2017nbmm}. 
Third, \emph{excess zeros}: the majority of entries in the count matrix are zero not because taxa are truly absent but because rare taxa fall below the sequencing detection threshold, with data sparsity commonly exceeding 70\%~\citep{Batardiere2025ZIPLN, Kodikara2022}.
Gaussian regression treats counts as unbounded continuous quantities.
Ad hoc transformations such as $\log(1+x)$ can improve numerical behavior and stability, but they do not by themselves resolve discrete support or compositional structure~\citep{booeshaghi2021log1p}.

\paragraph{The Poisson Log-Normal model.}
The generalized linear model with Poisson or Negative Binomial likelihood is the natural framework for overdispersed count data~\citep{zhang2017nbmm}.
For univariate differential abundance testing, Negative Binomial models are widely used, as implemented in DESeq2~\citep{Love2014DESeq2} and edgeR~\citep{Robinson2010edgeR}.
These are, however, marginal models: each taxon is fitted independently, so the joint dependence structure across taxa is not represented.
The Poisson Log-Normal (PLN) model, first introduced by Aitchison and Ho~\citep{aitchison1989multivariate}, addresses this by placing a multivariate Gaussian distribution on the log-intensities, capturing correlations between different taxa through a shared latent covariance matrix while still respecting the discrete, non-negative nature of the data and accommodating library-size offsets.
This makes PLN a single coherent framework for prediction, ordination, clustering, and network inference from count data~\citep{Chiquet2021, SUBEDI20251255}.

\paragraph{Prediction and representation learning.}
Poisson PCA has been assessed primarily through simulations emphasizing principal component recovery, robustness, and computational efficiency, with microbiome datasets serving as motivating illustrations~\citep{Kenney2021PoissonPCA, Virta2023PoissonPCA}.
These studies evaluate reconstruction error and latent dimension recovery on training data rather than held-out predictive performance, and do not include penalized Poisson regression as a baseline.
The PLN framework paper \citep{Chiquet2021} presents a suite of qualitative applications spanning ordination, regression, network inference, and clustering across several ecological and microbiome datasets, but does not impose a standardized cross-validation protocol or compare against penalized regression.
Extensions addressing zero inflation and hierarchical structure compare PLN variants on synthetic and real microbiome data using in-sample criteria such as variational evidence lower bound (ELBO), log-likelihood or information criteria (AIC/BIC), and downstream discrimination or generative performance~\citep{Batardiere2025ZIPLN, Chaussard2025PLNTree}.

\paragraph{Network inference.}
Network benchmarks for count data graphical models evaluate edge recovery on synthetic PLN graphs and single-cell or gene expression data, reporting ROC curves, AUPR, and edge-weight correlations~\citep{Xiao2022PLNet}.
Synthetic data generators drawing from the PLN distribution have been developed to stress-test network algorithms under controlled graph topologies and sparsity levels~\citep{Qian2024}.
More broadly, \citet{Ghaeli2025} proposed a benchmarking framework focused on reproducibility, using bootstrap resampling to generate modified versions of real microbiome data, but stopped short of validating inferred edges against experimentally confirmed interactions.
Across these studies, edge-recovery assessment relies on synthetic or statistically constructed ground truth.

\begin{table}[htbp]
\caption{Summary of prior benchmarks of PLN and Poisson algorithms.}
\centering
\footnotesize
\setlength{\tabcolsep}{3pt}
\begin{tabular}{>{\raggedright\arraybackslash}p{2.5cm}>{\raggedright\arraybackslash}p{3.2cm}>{\raggedright\arraybackslash}p{3.2cm}>{\raggedright\arraybackslash}p{4.1cm}}
\toprule
Study (year) & Data / setting & Algorithms tested & Evaluation metrics \\
\midrule
\citet{Kenney2021PoissonPCA} & Simulations; microbiome example & Poisson PCA, PLN & Subspace recovery (principal angles); robustness to noise and exposure; runtime \\
\midrule
\citet{Chiquet2021} & Multiple ecology/microbiome case studies & PLN, PLNPCA, PLNNetwork, PLNMixture & BIC/ICL for model order selection; illustrative fits across tasks \\
\midrule
\citet{Xiao2022PLNet} & Synthetic count networks; real scRNA-seq data & PLNet, VPLN, latentcor, glasso & AUPR/AUC for edge recovery; runtime \\
\midrule
\citet{Virta2023PoissonPCA} & Simulations; real count matrices & Poisson PCA, vectorization PCA, matrix-normal methods & Subspace estimation error (principal angles); latent-rank selection accuracy; runtime \\
\midrule
\citet{Qian2024} & Synthetic microbial communities (5 sizes, 4 topologies) & PLN, MIC, Pearson, MA & 3-class accuracy for interaction type recovery; topology and size effects \\
\midrule
\citet{Batardiere2025ZIPLN} & High-zero simulations; cow microbiome & ZIPLN, PLN & In-sample log-likelihood (ELBO/BIC/AIC); latent dispersion; group discrimination \\
\midrule
\citet{Chaussard2025PLNTree} & Synthetic datasets; human gut microbiome & PLN-Tree, mean-field PLN, PLN & ELBO/log-likelihood; Shannon, Simpson, Bray-Curtis diversity; downstream classification accuracy \\
\midrule
This work (2026) & 20 real microbiome datasets; 5 experimental interaction datasets & PLN, PLNNetwork, \texttt{GLMNet(Poisson)}, featureless & Prediction: held-out Poisson deviance under 3-fold LOTO-CV; network inference: F1 score against experimental ground truth; runtime; memory \\
\bottomrule
\end{tabular}
\label{tab:related_work}
\end{table}

\paragraph{Summary of contributions.}
Existing benchmarks of Poisson regression and Poisson Log Normal algorithms have primarily targeted narrow analytical tasks, evaluating individual variants such as Poisson PCA~\citep{Kenney2021PoissonPCA, Virta2023PoissonPCA}, PLN regression or network formulations~\citep{Chiquet2021, Xiao2022PLNet}, and zero-inflated or hierarchical extensions~\citep{Batardiere2025ZIPLN, Chaussard2025PLNTree} but typically in isolation, on synthetic data, or through qualitative case studies rather than unified systematic comparisons.
Most prior evaluations emphasize synthetic or illustrative demonstrations, lack standardized cross-validation protocols, omit uncertainty quantification, and rarely compare PLN variants directly to penalized Poisson regression algorithms on real microbiome data. 
As shown in Table~\ref{tab:related_work}, the field still lacks a unified empirical framework capable of assessing both predictive accuracy and model stability under consistent experimental conditions.

\begin{itemize}
  \item We introduce LOTO-CV as a held-out evaluation framework for Poisson count models, filling a methodological gap in the PLN benchmark literature where all prior evaluations use in-sample criteria. Applied to 20 real microbiome datasets, this yields the first out-of-sample comparison of PLN and \texttt{GLMNet(Poisson)} on real data.
  \item We identify $N/D$ as the primary empirical predictor of the dataset-level winner, MAC as the strongest secondary signal, and overdispersion as an additional predictor. PLN wins in 12 of 14 datasets with $N/D < 5$ and in none of the 6 with $N/D \geq 5$, improving on \texttt{GLMNet(Poisson)} by up to 38\% in the favorable regime.
  \item We evaluate PLNNetwork and \texttt{GLMNet} neighbourhood selection against all five publicly available datasets with experimentally validated microbial interaction truth. This is the first comparison of these approaches on real biological ground truth rather than simulated networks.
\end{itemize}

\begin{figure}[!t]
\centering
\includegraphics[width=0.92\textwidth]{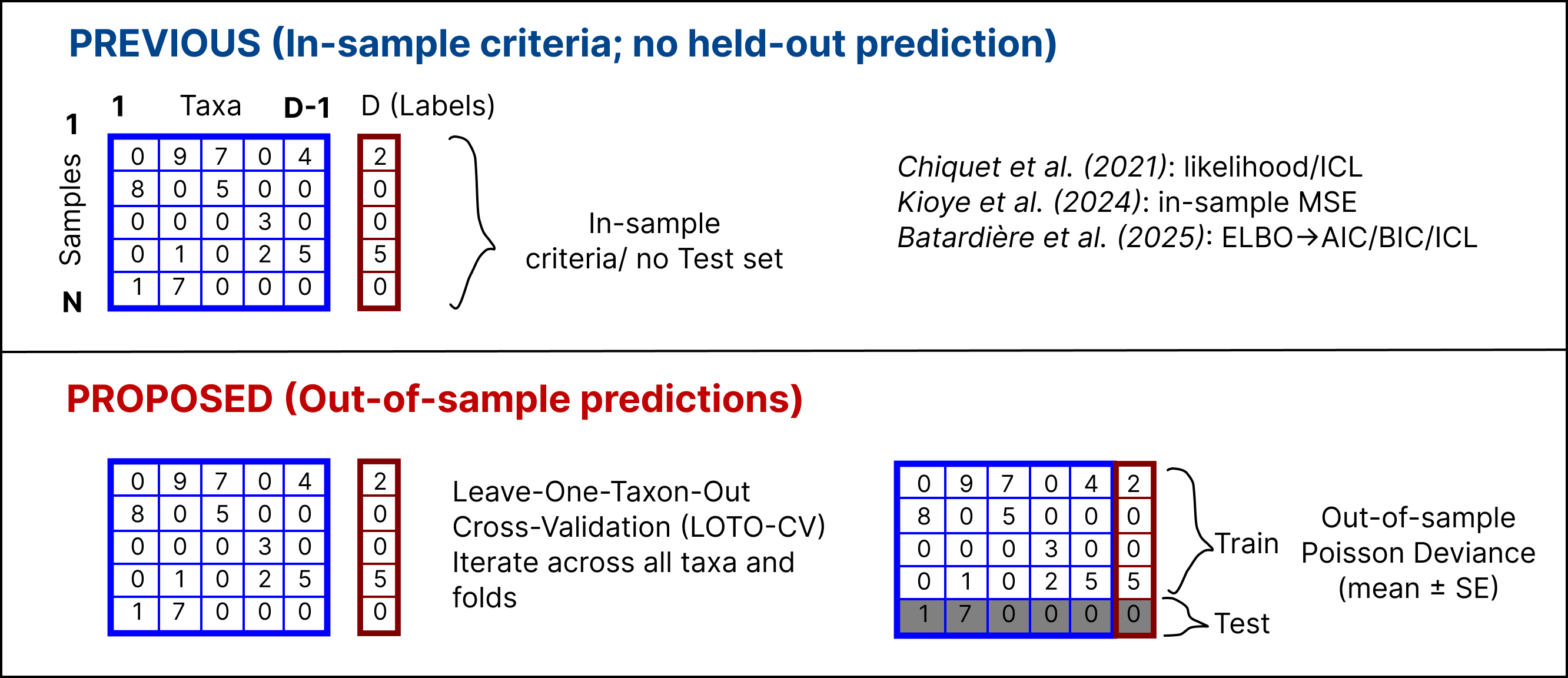}
\caption{Conceptual contrast between earlier evaluation practice and the count-prediction protocol used in this study.
Previous work often assessed count models with in-sample fit criteria computed on the observed abundance matrix.
Our benchmark instead withholds a target taxon, predicts its counts out of sample from the remaining taxa, and scores those predictions with held-out Poisson deviance under leave-one-taxon-out cross-validation.}
\label{fig:pipeline}
\end{figure}

\section{Methods}
\subsection{Datasets}
\subsubsection{Count-prediction datasets}
\label{subsec:count-prediction-datasets}
We evaluated count prediction on a heterogeneous benchmark of 20 publicly available real microbiome datasets spanning oral, skin, nasal, stool, infant, pediatric, and general gut systems, together with several collections focused on disease including colorectal (CRC), \textit{Clostridium difficile} infection, HIV, obesity, and type~1 diabetes.
The retained benchmark set includes feature tables at the family, genus, species, and OTU levels and covers a broad range of sample sizes and feature dimensions.
Across datasets, the number of samples ranged from $N=32$ to $N=18{,}270$, the number of features ranged from $D=24$ to $D=257$, and the fraction of zero entries in the benchmarked abundance matrices ranged from $30.64\%$ to $93.01\%$.

The goal of this collection is not to represent a single biological system, but to cover the range of settings in which researchers actually face a choice between a joint latent count model and a faster penalized regression baseline.
Several dataset families are particularly important for that comparison.
American Gut 2~\citep{McDonald2018}, MehtaRS 2018~\citep{Mehta2018}, and MBQC integrated OTUs~\citep{MBQC2017} represent larger and more heterogeneous microbiome surveys, including a very large study in the MBQC collection.
These datasets are useful because they test whether PLN retains an advantage when sample size is no longer the main bottleneck.

Other datasets probe the opposite end of the benchmark.
Lozupone HIV~\citep{Lozupone2013} and several of the disease-focused stool datasets have relatively small $N$, moderate or large $D$, or both.
These are the settings in which the balance between latent covariance estimation and penalized univariate fitting becomes most visible.
The benchmark also includes multiple colorectal cancer datasets~\citep{Wang2012, Zeller2014}, early-life gut studies such as Diabimmune Karelia~\citep{Vatanen2016} and ShaoY 2019~\citep{Shao2019}, and targeted disease collections such as Schubert CDI~\citep{Schubert2014}, which together provide repeated examples of related biological questions under different sample sizes, taxonomic resolutions, and sparsity levels.

This diversity is important for interpretation.
The paper's goal is not to identify a single best-performing method on one cohort, but to determine which properties of the count matrix predict when PLN is worth its added complexity.
The broad spread of $N$, $D$, sparsity, and biological settings is therefore a deliberate part of the benchmark design rather than background variation to be averaged away.
Full dataset metadata, including the representative inventory are provided in Table~S2 of the Supplementary Material.

\subsubsection{Network-inference datasets}
\label{subsubsec:network-inference-datasets}
We evaluated network inference on five real microbial interaction benchmarks with experimentally supported edge truth.
These datasets were chosen because each provides an external experimental source of interaction truth rather than a simulated or statistically constructed target.
They therefore allow direct evaluation of edge recovery on biological systems for which pairwise competition, facilitation, or conditional effects were measured independently of the fitted models.

The benchmark contains two distinct kinds of truth.
The first kind targets broad undirected edge recovery.
OMM12~\citep{Weiss2022OMM12} and OMM12 keystone 2023~\citep{Weiss2023Keystone} are defined human gut communities in which community composition was perturbed through coculture, deletion, or keystone experiments.
These datasets are well suited to evaluating recovery of an overall community interaction graph because the truth reflects system-wide effects across a small, fully enumerated set of taxa.
PairInteraX~\citep{Zhu2025PairInteraX} plays a complementary role.
It provides a much larger human gut species panel with experimentally measured pairwise coculture interaction labels, which makes it useful for testing whether the same methods still recover true edges when the number of taxa increases substantially.

The second kind targets local or directional effects.
Butyrate assembly 2021~\citep{Clark2021Butyrate} contains singleton, pair, and larger synthetic gut communities, allowing local effects to be defined from pair-vs-singleton abundance shifts.
Host fitness 2018~\citep{Gould2018HostFitness} is a five-species \textit{Drosophila} gut system in which pair-vs-singleton colony counts were linked to host reproductive fitness.
These two datasets are informative because the underlying signal is closer to ``taxon $a$ affects taxon $b$'' than to a single global undirected graph, and thus test whether nodewise and joint graphical procedures behave differently when the truth is more local.

Across these datasets, sample size ranged from $N=109$ to $N=1449$, feature dimension ranged from $D=5$ to $D=96$, and data sparsity ranged from $21.25\%$ to $66.19\%$.
Processed abundance matrices and truth edge sets were harmonized into a common benchmark format so that PLNNetwork and \texttt{GLMNet(Poisson)} could be evaluated under the same scoring rules despite their different biological origins.
Dataset metadata are provided in Table~S3 of the Supplementary Material.

\subsection{Competing methods}
\label{subsec:algorithms}

We used different methods for count prediction and network inference.
Let $Y_{ij}$ denote the observed count for sample $i$ and taxon $j$, and let $s_i$ denote the sequencing-depth offset for sample $i$.
For count prediction, we compared a featureless baseline, \texttt{GLMNet(Poisson)}, and PLN.
For network inference, we compared \texttt{GLMNet(Poisson)} neighbourhood selection, and PLNNetwork.

For count prediction, the featureless baseline predicts each target taxon by the training-set mean
\[
\hat{\mu}_{ij} = \bar{Y}_{\cdot j}.
\]
For network inference, the diagonal baseline predicts an empty graph, that is
\[
\hat{A}_{jk} = 0 \qquad \text{for all } j \neq k.
\]

\texttt{GLMNet(Poisson)} fits a penalized Poisson regression for each target taxon~\citep{Friedman2010}.
For count prediction, we model
\[
Y_{ij} \mid Y_{i,-j} \sim \mathrm{Poisson}(\mu_{ij}),
\qquad
\log \mu_{ij} = \log s_i + \beta_{0j} + \sum_{k \neq j} \beta_{jk}\,\log(1 + Y_{ik}),
\]
with coefficients estimated by minimizing
\[
-\ell_j(\beta_{0j}, \beta_j) + \lambda_j \lVert \beta_j \rVert_1.
\]
This yields a separate sparse Poisson regression for each target taxon and does not estimate a joint latent covariance across taxa.
For network inference, we use nodewise neighbourhood selection~\citep{Meinshausen2006}.
An undirected edge is included when the symmetrized coefficient support is nonzero, that is
\[
\hat{A}_{jk} = \mathbb{1}\!\left\{\max\!\big(|\hat{\beta}_{jk}|, |\hat{\beta}_{kj}|\big) > 0\right\}.
\]

PLN fits a Poisson log-normal model with a latent Gaussian layer~\citep{aitchison1989multivariate,Chiquet2021}.
For each sample,
\[
Y_{ij} \mid Z_i \sim \mathrm{Poisson}\!\left(s_i \exp(\eta_j + Z_{ij})\right),
\qquad
Z_i \sim \mathcal{N}(0, \Sigma).
\]
For count prediction, held-out means are obtained from Gaussian conditioning in the latent layer,
\[
\hat{\mu}_{ij}
=
s_i \exp\!\left(\hat{\eta}_j + \hat{m}_{ij \mid -j} + \tfrac{1}{2}\hat{v}_{j \mid -j}\right),
\]
where $\hat{m}_{ij \mid -j}$ and $\hat{v}_{j \mid -j}$ are the conditional latent mean and variance for taxon $j$ given the observed taxa in sample $i$.
To stabilize these predictions, we applied covariance shrinkage with parameter $\alpha \in \{0.0, 0.1, \ldots, 1.0\}$ by scaling the off-diagonal entries of $\hat{\Sigma}$.

PLNNetwork extends PLN by estimating a sparse precision matrix $\Omega = \Sigma^{-1}$ within the same variational framework~\citep{chiquet2019variational,Chiquet2021}.
Conceptually, the fitted model maximizes a penalized variational objective of the form
\[
\mathrm{ELBO}(\eta, \Omega) - \rho \lVert \Omega \rVert_{1,\mathrm{off}},
\]
so that zero off-diagonal entries of $\Omega$ correspond to absent conditional associations.
For network inference, we initialized PLNNetwork from a full PLN fit, evaluated a matched log-spaced penalty grid from $\rho_{\max}$ to $0.005\,\rho_{\max}$, set diagonal penalization to false, and selected the final graph by EBIC.
The estimated interaction graph is therefore
\[
\hat{A}_{jk} = \mathbb{1}\!\left\{\hat{\Omega}_{jk} \neq 0\right\}.
\]

All PLN models were fit with variational inference using the \texttt{PLNmodels} package~\citep{Chiquet2021}.
All penalized Poisson regressions were fit with \texttt{glmnet}~\citep{Friedman2010}.

\subsection{Evaluation protocols}
\label{subsec:evaluation-protocols}

\subsubsection{Count-prediction evaluation}

For count prediction, we used a leave-one-taxon-out prediction protocol combined with 3-fold cross-validation over samples.
Each taxon is treated in turn as the prediction target, the remaining taxa serve as predictors, and predictive performance is evaluated on held-out samples.
The resulting held-out scores are then averaged across target taxa and sample folds.
The inner CV used 3 folds for hyperparameter selection.
The primary metric is mean held-out Poisson deviance, defined as
\[
  \mathcal{D}(y, \hat\mu) = \frac{2}{m}\sum_{i=1}^{m} \left( y_i \log\frac{y_i}{\hat\mu_i} - (y_i - \hat\mu_i) \right),
\]
where $m$ is the number of held-out samples, $y_i$ are the observed counts, and $\hat\mu_i$ are the predicted means.
We use the standard convention $0 \log 0 = 0$.
We used Poisson deviance rather than RMSE, correlation, or absolute-error criteria because all competing models are fitted under Poisson mean assumptions.
Poisson deviance is a proper scoring rule for count data: it is minimised in expectation by the true conditional mean $\mathbb{E}[Y_i \mid x_i]$, so a model that achieves lower held-out deviance genuinely predicts better rather than simply fitting the training data more closely~\citep{Gneiting2007ProperScoring}.
This property, which in-sample criteria such as ELBO and BIC do not share, is what makes held-out Poisson deviance the principled choice for comparing count models out of sample.
Gaussian losses such as RMSE are less appropriate here because they ignore the mean-variance relationship that is central to the comparison.
Formal definitions, a proof of properness, and a discussion of the exchangeability assumption underlying the estimator are given in Appendix~S1.
For \texttt{GLMNet(Poisson)}, the regularization parameter $\lambda$ was selected by inner-fold Poisson deviance using \texttt{cv.glmnet}.
For PLN, the shrinkage parameter $\alpha$ was selected by 3-fold inner cross-validation on Poisson deviance.
The featureless baseline serves as a lower bound: any algorithm with higher deviance than the featureless baseline provides no useful predictive signal.

\subsubsection{Network-inference evaluation}

For network inference, estimated interaction graphs were compared against experimentally validated microbial interactions from five ground-truth datasets: PairInteraX~\citep{Zhu2025PairInteraX}, OMM12~\citep{Weiss2022OMM12}, OMM12 keystone 2023~\citep{Weiss2023Keystone}, butyrate assembly 2021~\citep{Clark2021Butyrate}, and host fitness 2018~\citep{Gould2018HostFitness}.
Each dataset provides experimentally derived pairwise interaction labels obtained from co-culture, dropout, or combinatorial abundance experiments.
The primary metric is the F1 score on edge recovery, computed by matching predicted edges against the ground-truth interaction set.
For PLNNetwork, the penalty parameter was selected by EBIC on the training data.
For \texttt{GLMNet(Poisson)} neighbourhood selection, $\lambda$ was fixed at \texttt{lambda.1se} from \texttt{cv.glmnet}.
The diagonal baseline, which predicts an empty graph, serves as the trivial reference point.

\subsection{Computational setup}
\label{subsec:computational-setup}

All experiments were run on Northern Arizona University's Monsoon high-performance computing cluster~\citep{Buechler2025MonsoonIntro}.
Prediction and network benchmark jobs were executed as single-core Slurm jobs.

Runtime and memory scaling were profiled using the \texttt{atime} R package on the Vogtmann et al.\ stool species dataset~\citep{Vogtmann2016CRC}.
For the saved profiling run, the number of taxa $D$ was varied from 10 to 1000 at a fixed sample size of $N=110$, and each configuration was run three times.
We report median wall time and peak memory across those repeated measurements.

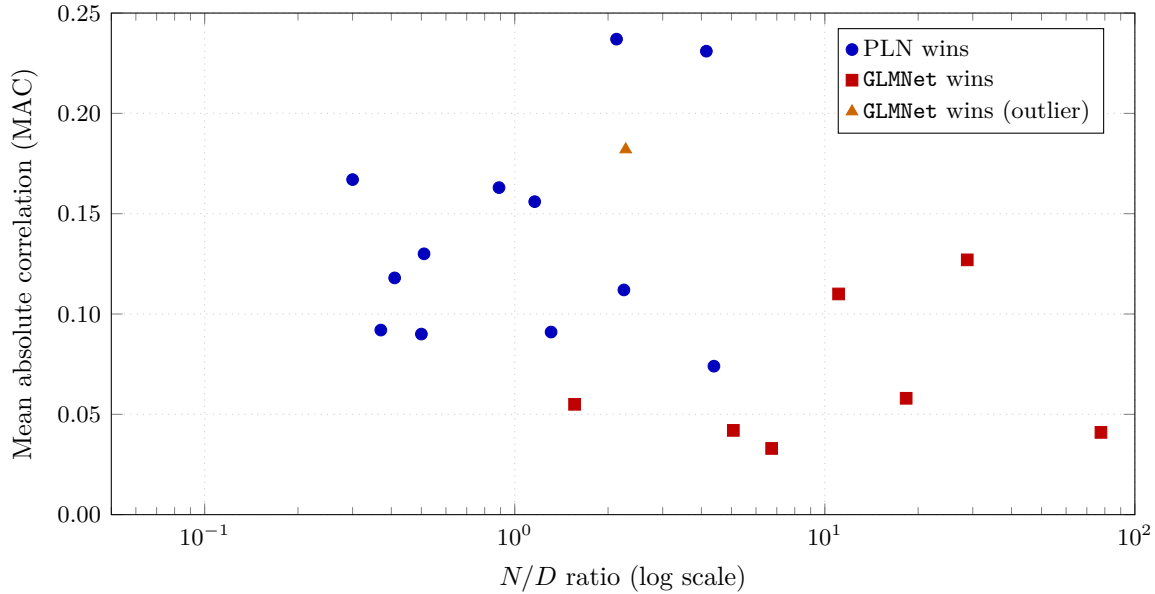
\begin{figure}[!t]
\centering
\begin{tikzpicture}
\begin{axis}[
  width=0.92\textwidth,
  height=0.50\textwidth,
  xlabel={$N/D$ ratio (log scale)},
  ylabel={Mean absolute correlation (MAC)},
  xmode=log,
  xmin=0.05, xmax=100,
  ymin=0, ymax=0.25,
  ytick={0, 0.05, 0.10, 0.15, 0.20, 0.25},
  yticklabel style={/pgf/number format/fixed, /pgf/number format/fixed zerofill,
                    /pgf/number format/precision=2, font=\footnotesize},
  ymajorgrids=true,
  xmajorgrids=true,
  grid style={dotted, gray!40},
  tick label style={font=\footnotesize},
  label style={font=\small},
  legend style={font=\footnotesize, at={(0.97,0.97)}, anchor=north east},
  legend cell align=left,
]
\addplot[only marks, mark=*, mark size=2.2pt, blue!75!black]
  coordinates {
    (0.30,0.167)(0.37,0.092)(0.41,0.118)(0.50,0.090)
    (0.51,0.130)(0.89,0.163)(1.16,0.156)(1.31,0.091)
    (2.13,0.237)(2.25,0.112)(4.15,0.231)(4.39,0.074)
  };
\addlegendentry{PLN wins}
\addplot[only marks, mark=square*, mark size=2.2pt, red!75!black]
  coordinates {
    (1.56,0.055)(5.07,0.042)(6.74,0.033)(11.08,0.110)
    (18.30,0.058)(28.80,0.127)(77.74,0.041)
  };
\addlegendentry{\texttt{GLMNet} wins}
\addplot[only marks, mark=triangle*, mark size=2.5pt, orange!80!black]
  coordinates {
    (2.28,0.182)
  };
\addlegendentry{\texttt{GLMNet} wins (outlier)}
\end{axis}
\end{tikzpicture}
\caption{Dataset-level winner as a function of $N/D$ and MAC across the 20-dataset real-data benchmark.
Blue circles: PLN achieves lower held-out Poisson deviance.
Red squares: \texttt{GLMNet(Poisson)} achieves lower held-out Poisson deviance.
The orange triangle marks American Gut~1, a GLMNet win at low $N/D$ and high MAC that runs counter to the general pattern.}
\label{fig:nd_mac_scatter}
\end{figure}

\section{Results}
\subsection{Count-prediction results}

The central empirical result is that relative predictive performance varies systematically across datasets rather than favoring a single method.
Figure~\ref{fig:nd_mac_scatter} shows that the sample-to-taxon ratio $N/D$ is the primary determinant of outperformance across datasets. 
PLN wins in 12 of 14 datasets with $N/D < 5$ and in none of the 6 datasets with $N/D \geq 5$.
PLN tends to win when $N/D$ is small and the average dependence among taxa is strong.
\texttt{GLMNet(Poisson)} tends to win when $N/D$ is large and inter-taxon dependence is weaker.

Mean Absolute Correlation (MAC) is the strongest secondary signal and provides a visually intuitive summary of inter-taxon dependence strength.
Among datasets where \texttt{GLMNet(Poisson)} wins despite low $N/D$, CosteaPI~2017 ($N/D = 1.56$, MAC $= 0.055$) has distinctly low MAC, consistent with weak community dependence reducing the benefit of PLN's latent layer.
American Gut~1 ($N/D = 2.28$, MAC $= 0.182$) is an outlier where \texttt{GLMNet(Poisson)} wins despite moderate dependence.
Overdispersion is an additional predictor that further distinguishes settings where PLN's latent Gaussian layer is most beneficial.
This aligns with the structure of the PLN model.
Its latent Gaussian layer is most useful when counts depart substantially from the Poisson assumption and when joint dependence among taxa carries predictive signal.

Table~S4 reports the complete retained 20-dataset count-prediction benchmark and highlights the strongest dataset-level contrasts in both directions.
The table includes standard errors, paired $p$-values from a Wilcoxon signed-rank test, and Benjamini--Hochberg adjusted $q$-values to account for multiple testing across the 20 datasets.

\begin{figure}[!t]
  \centering
  \includegraphics[width=0.95\textwidth]{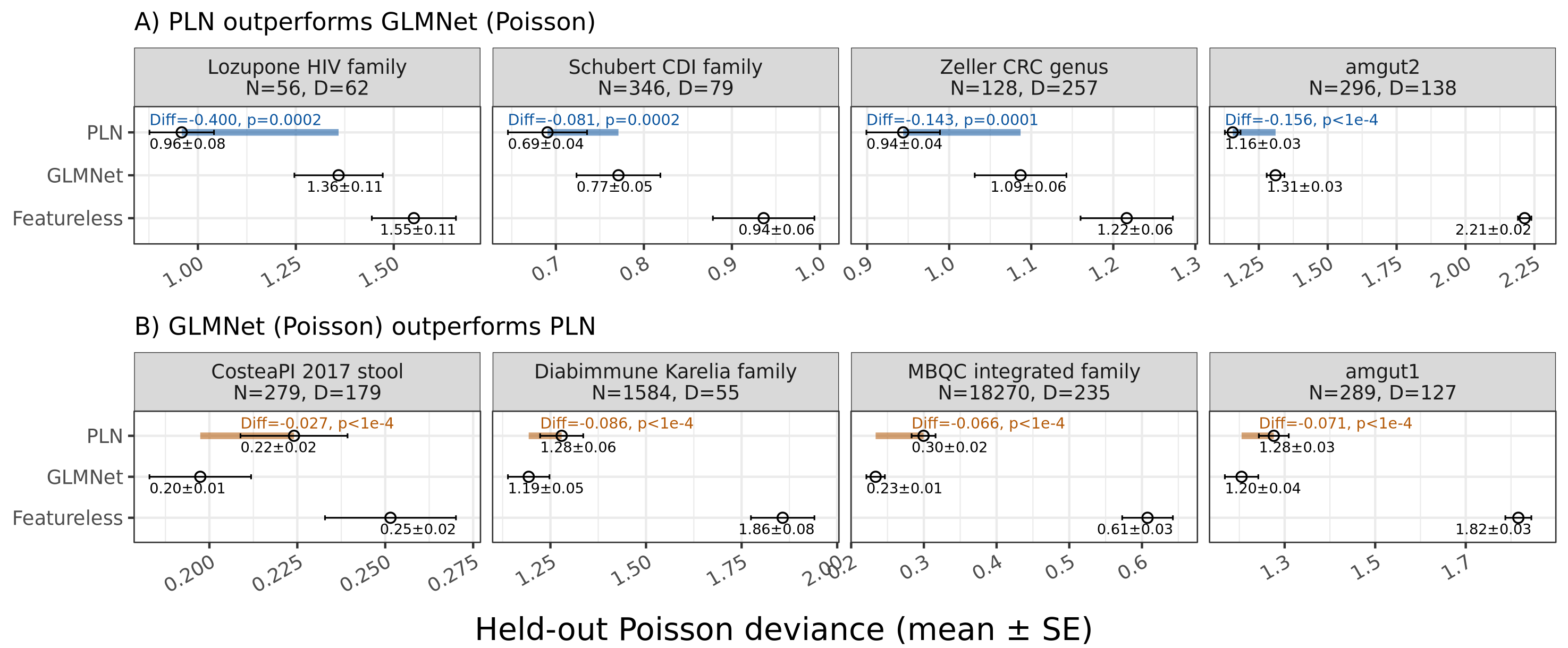}
  \caption{Representative datasets illustrating both directions of the PLN
  vs.\ GLMNet(Poisson) comparison.
  Top: datasets where PLN achieves lower held-out Poisson deviance.
  Bottom: datasets where GLMNet(Poisson) achieves lower held-out Poisson deviance.
  Each panel reports held-out Poisson deviance (mean $\pm$ SE across three outer
  cross-validation folds) for PLN, GLMNet(Poisson), and the featureless baseline.
  The annotated difference and $p$-value are from the paired comparison of PLN
  and GLMNet(Poisson) across folds.}
  \label{fig:count_pred_case_study}
\end{figure}

Figure~\ref{fig:count_pred_case_study} makes the same contrast concrete with representative case studies from both parts of the benchmark.
In the upper panel, Lozupone HIV family, Zeller CRC genus, Schubert CDI family, and amgut2 all show clear reductions in held-out Poisson deviance relative to \texttt{GLMNet(Poisson)}.
These are settings in which limited effective sample size or stronger dependence among taxa makes joint latent modeling useful.

The lower panel shows the opposite pattern.
CosteaPI 2017 stool, MBQC integrated, Diabimmune Karelia, and amgut1 all favor \texttt{GLMNet(Poisson)}, with the largest advantages appearing in better sampled settings where independent penalized regressions are already sufficient.
In all eight examples, both fitted methods improve over the featureless baseline, indicating that the comparison is between two informative models rather than between signal and noise.

These results suggest a simple practical rule.
PLN is most attractive when $N/D$ is small, inter-taxon dependence is strong, and counts are overdispersed, because those are the settings in which latent covariance estimation appears to improve prediction.
\texttt{GLMNet(Poisson)} becomes preferable as $N/D$ grows and counts approach the Poisson limit, where the extra flexibility of PLN adds cost without commensurate gains.
These are empirical heuristics rather than hard thresholds, but they provide a more concrete starting point for method selection than has previously been available.

\subsection{Network-inference results}
\label{subsec:network-inference-results}

\begin{figure}[!t]
  \centering
  \includegraphics[width=0.94\textwidth]{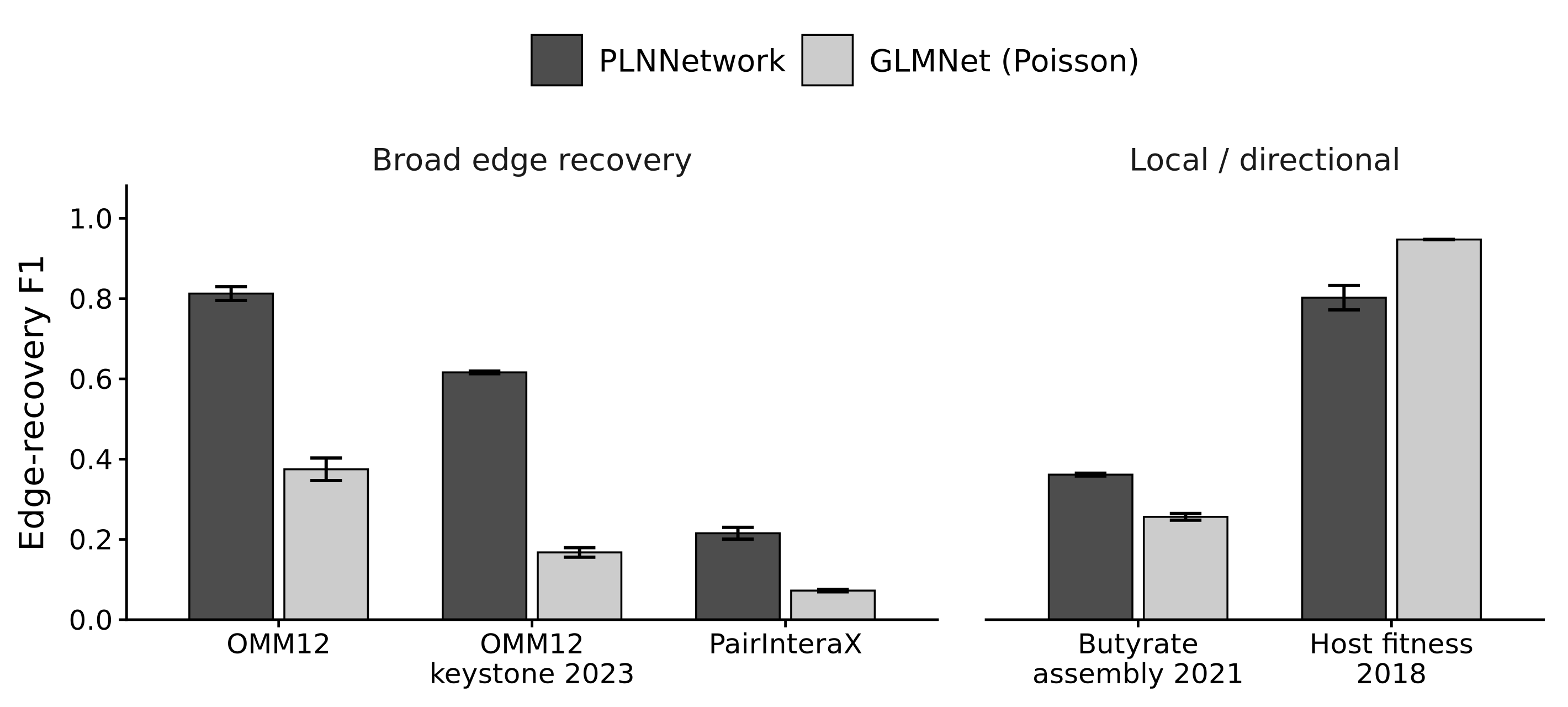}
  \caption{Edge-recovery F1 score on five experimental ground-truth datasets for
  PLNNetwork and \texttt{GLMNet(Poisson)}.
  Datasets are grouped by ground-truth type: broad edge recovery (OMM12,
  OMM12 keystone 2023, PairInteraX) and local/directional (butyrate assembly 2021,
  host fitness 2018).
  Bars show mean F1 score over $B = 20$ bootstrap resamples of the abundance matrix;
  error bars are $\pm 1$ SE.}
  \label{fig:network_f1}
\end{figure}

PLNNetwork outperforms \texttt{GLMNet(Poisson)} on four of the five real interaction-truth datasets, but the margin depends strongly on the type of biological truth being evaluated (Figure~\ref{fig:network_f1}).
On the three broad edge-recovery datasets, PLNNetwork has the clearer advantage.
Its mean bootstrap F1 score exceeds that of \texttt{GLMNet(Poisson)} on OMM12 ($0.81 \pm 0.02$ versus $0.37 \pm 0.03$), OMM12 keystone 2023 ($0.62 \pm 0.003$ versus $0.17 \pm 0.01$), and PairInteraX ($0.22 \pm 0.01$ versus $0.07 \pm 0.003$).
These datasets reward recovery of a global undirected interaction structure, which matches the inductive bias of a sparse latent precision matrix.

The OMM12 example in Figure~\ref{fig:network_case_omm12} illustrates this contrast.
OMM12 is a defined gut community comprising 12 bacterial taxa shown as nodes in Figure~\ref{fig:network_case_omm12}.
The experimental graph is relatively dense and dominated by negative interactions.
PLNNetwork recovers much of that structure, whereas \texttt{GLMNet(Poisson)} returns a substantially sparser graph with a different balance of edge signs.

The local and directional datasets tell a more nuanced story.
On butyrate assembly 2021, PLNNetwork retains a modest advantage ($0.36 \pm 0.004$ versus $0.26 \pm 0.008$).
On host fitness 2018, however, the ranking reverses and \texttt{GLMNet(Poisson)} achieves the best overall recovery ($0.95 \pm 0.00$ versus $0.80 \pm 0.03$).
This contrast is shown in Figure~S2 of the Supplementary Material.
The five taxa are \textit{Lactobacillus plantarum} (LP), \textit{Lactobacillus brevis} (LB), \textit{Acetobacter pasteurianus} (AP), \textit{Acetobacter tropicalis} (AT), and \textit{Acetobacter orientalis} (AO).
PLNNetwork recovers part of the experimentally supported structure but misses edges involving AO and introduces a positive LP--LB edge that is not present in the truth.
By contrast, \texttt{GLMNet(Poisson)} closely matches the local negative interaction pattern and misses only one true edge.
In this setting, the ground truth is driven by targeted pairwise effects rather than by a broad undirected community network, and nodewise Poisson regressions appear better aligned with that local signal.

\begin{figure}[!t]
  \centering
  \includegraphics[width=0.94\textwidth]{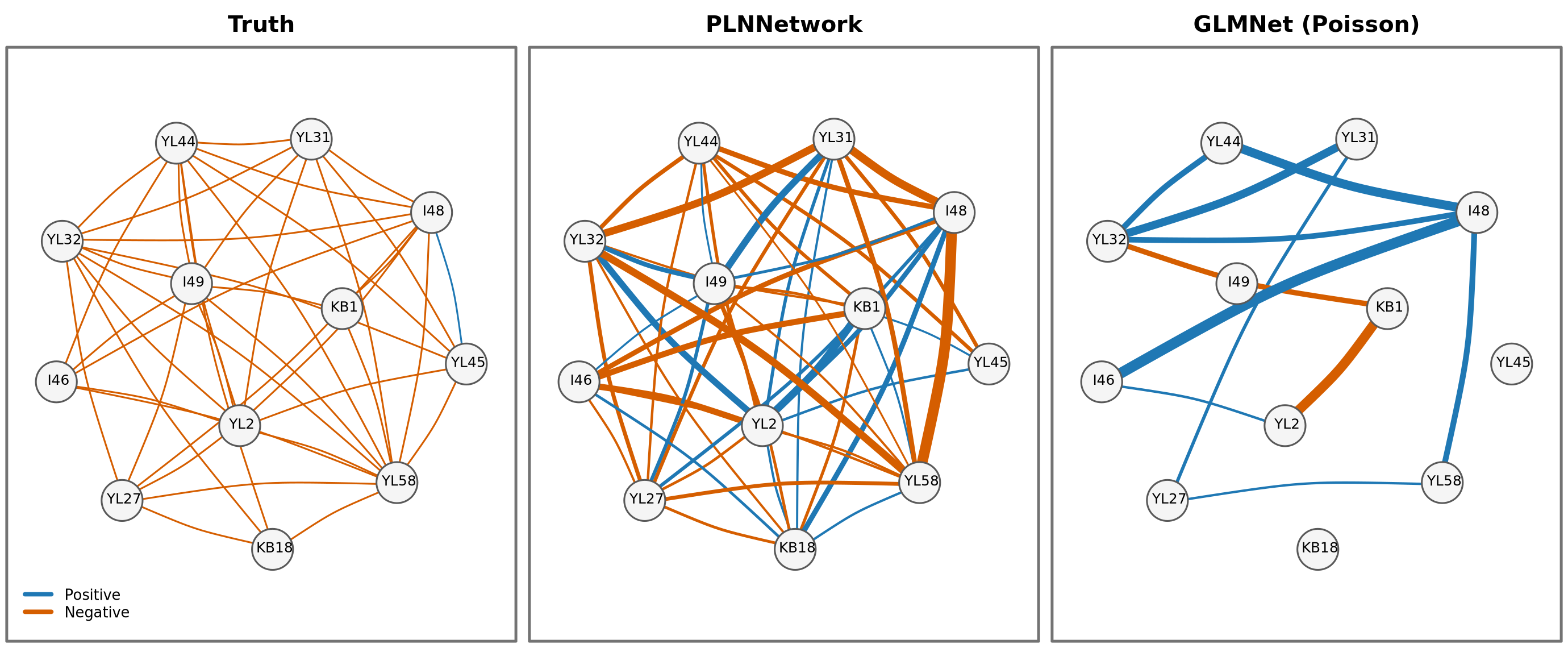}
  \caption{Ground-truth and inferred networks for the OMM12 defined community (12 taxa).
  Each panel shows the experimental ground truth (left), the PLNNetwork prediction
  (centre), and the \texttt{GLMNet(Poisson)} prediction (right).
  Node labels are strain codes: KB1 (\textit{E.\ faecalis}), YL2 (\textit{B.\ animalis}), KB18 (\textit{A.\ muris}), YL27 (\textit{M.\ intestinale}), YL31 (\textit{F.\ plautii}), YL32 (\textit{E.\ clostridioformis}), YL44 (\textit{A.\ muciniphila}), YL45 (\textit{T.\ muris}), I46 (\textit{C.\ innocuum}), I48 (\textit{B.\ caecimuris}), I49 (\textit{L.\ reuteri}), YL58 (\textit{B.\ coccoides}).
  Edge colour indicates interaction sign (blue: positive, orange: negative).}
  \label{fig:network_case_omm12}
\end{figure}

\subsection{Computational scaling}
\label{subsec:computational-scaling}

Computational cost differs sharply between the two model classes.
Across the full range of $D$ values in Figure~\ref{fig:atime}, PLN is consistently slower and more memory-intensive than \texttt{GLMNet(Poisson)}.
At $D = 1000$, PLN requires about 76\,s of wall time and 392\,MB of peak memory, whereas \texttt{GLMNet(Poisson)} requires about 0.28\,s and 79\,MB.

The scaling pattern matches the model structure.
PLN must estimate and manipulate a dense latent covariance matrix, so its cost increases much more quickly with dimension.
\texttt{GLMNet(Poisson)} fits separate penalized regressions and remains comparatively cheap even at the largest tested dimension.
Even so, the observed PLN costs remain manageable on the benchmark sizes studied here, which makes the practical trade-off one of speed versus accuracy rather than feasibility versus infeasibility.

\begin{figure}[!t]
  \centering
  \includegraphics[width=0.94\textwidth]{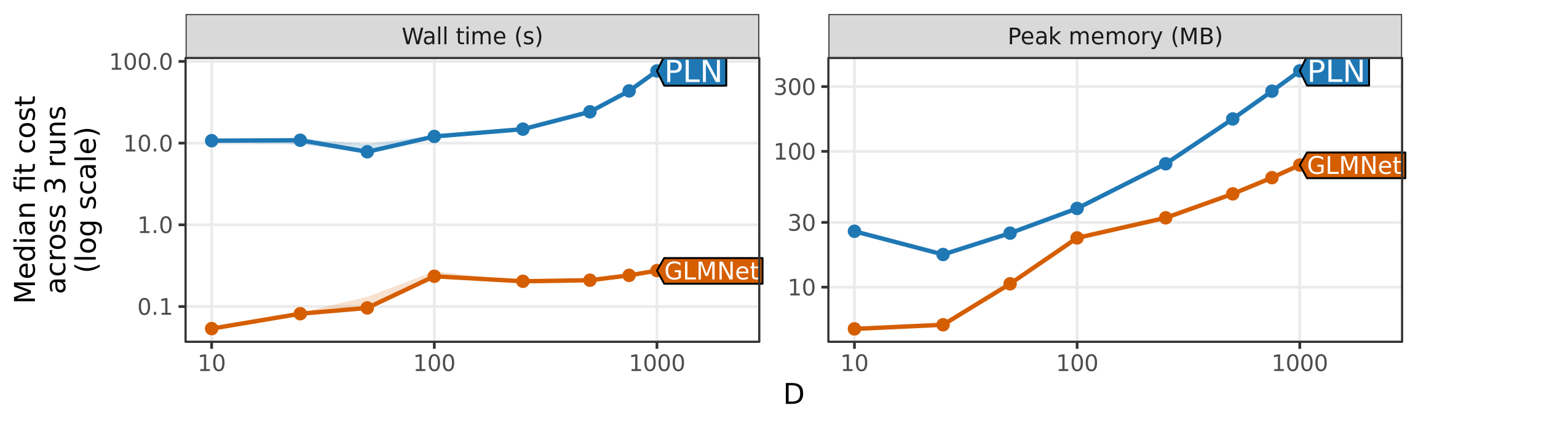}
  \caption{Median wall time (left) and peak memory (right) as a function of the
  number of taxa $D$, profiled on the Vogtmann et al.\ stool species dataset
  ($N = 110$) using the \texttt{atime} R package.
  Each point is the median of three runs; $D$ ranges from 10 to 1000.
  Both axes are on a log scale.}
  \label{fig:atime}
\end{figure}

\section{Discussion}
\label{sec:discussion}

The count-prediction results provide direct comparative guidance on when a joint latent count model is worth its additional complexity.
PLN must estimate a latent covariance structure with $O(D^2)$ degrees of freedom, so its advantage depends on whether the data contain enough information to support that extra layer.
When $N/D$ is small, inter-taxon dependence is strong, and counts are overdispersed, the latent Gaussian component captures signal that independent Poisson regressions miss.
When $N/D$ is large or inter-taxon dependence is weak, the extra flexibility of PLN adds variance and computational cost without improving prediction.
The win-rate pattern is direct: PLN takes 12 of 14 datasets below $N/D = 5$ and zero of the 6 above it, with MAC and overdispersion providing additional explanatory power for the residual cases.

The datasets where PLN shows the strongest gains illustrate the biological environments in which latent dependence modeling is most useful.
Castro-Nallar~2015 (oral cavity, $N/D = 0.30$, $+27\%$) and Lozupone~HIV ($N/D = 0.89$, $+30\%$) are small clinical studies from body sites and disease states known for tight bacterial co-associations, where the joint latent structure captures signal that independent regressions miss.
Baker~NASH/obesity ($N/D = 1.16$, $+30\%$) and Ross~obesity ($N/D = 0.51$, $+30\%$) involve metabolically disrupted gut communities where co-abundance patterns among taxa are elevated.
By contrast, PLN's largest losses occur in large population surveys: MBQC ($N/D = 77.74$, $-28\%$) and MehtaRS~2018 ($N/D = 5.07$, $-25\%$) are high-throughput multi-site collections where samples are abundant relative to taxa and the benefit of modeling joint dependence vanishes.
This suggests PLN is most valuable when studying disrupted or anatomically distinct communities under resource-constrained sampling, and that \texttt{GLMNet(Poisson)} suffices when large healthy cohorts are available.

The network-inference results point to a similar match between model structure and evaluation target.
PLNNetwork estimates a sparse undirected precision matrix and therefore aligns naturally with broad community-level interaction structure.
\texttt{GLMNet(Poisson)} uses nodewise regressions and is correspondingly better suited to local or directional effects.
The host fitness benchmark is the clearest example of this distinction, because its truth labels reflect targeted pairwise effects rather than a single global undirected graph.

Several limitations should be kept in mind.
The 20 real microbiome count prediction datasets are not fully independent: the benchmark includes multiple colorectal cancer cohorts and multiple HMP body-site collections, so the effective number of independent biological comparisons is somewhat less than 20.
The win-rate pattern is nonetheless consistent across biological environments, which limits the concern, but the findings should not be interpreted as arising from 20 entirely unrelated experiments.
The comparison is restricted to the Poisson model family by design; SPIEC-EASI~\citep{Kurtz2015SPIECEASI} and SparCC~\citep{Friedman2012SparCC} operate under compositional log-ratio assumptions that differ fundamentally from the Poisson likelihood, and including them would conflate model family with evaluation protocol.
The network benchmark uses all five experimental ground-truth datasets with publicly available microbial interaction truth that we are aware of; the scarcity of such data reflects fundamental experimental constraints rather than a selective choice, and the conclusions should be interpreted in light of this limitation.
Both fitted methods also receive the same log1p-transformed inputs, a design choice that improves comparability but may not be optimal for PLN.
Finally, the computational profiling is based on a single dataset and single-core execution, so it should be interpreted as a controlled comparison rather than as an exhaustive systems benchmark, although the observed costs are consistent with the expected scaling patterns and with prior reports of PLN runtimes.

\section{Conclusion}
\label{sec:conclusion}

This study provides a comparative evaluation of joint latent and penalized marginal Poisson models for microbiome count prediction and interaction recovery.
Across 20 real count microbiome datasets and five real interaction-truth benchmarks, neither model class dominates universally.
Instead, the benchmark identifies the conditions under which each approach is most appropriate.
For count prediction, PLN is most useful when samples are scarce relative to dimension and counts are strongly overdispersed.
\texttt{GLMNet(Poisson)} becomes the better practical choice as $N/D$ grows and the data become closer to the Poisson setting.
For network inference, PLNNetwork is stronger on broad undirected community graphs, whereas \texttt{GLMNet(Poisson)} is better matched to local or directional effects.
Although PLN is substantially more expensive computationally, its observed costs remain practical on the benchmark sizes studied here.
These results give researchers concrete guidance for deciding when latent dependence modeling is likely to justify its added complexity and when a simpler penalized marginal model is preferable.

\paragraph{Key findings.}
The benchmark yields five main observations.
First, $N/D$ is the strongest predictor of the count-prediction winner.
Second, MAC is the strongest secondary signal: datasets with high inter-taxon dependence tend to favor PLN even after accounting for $N/D$.
Third, overdispersion provides additional discriminative power beyond $N/D$ and MAC, consistent with PLN's latent Gaussian layer being most beneficial when counts depart substantially from the Poisson assumption.
Fourth, the network results depend on the kind of biological truth being evaluated, with PLNNetwork favored by broad undirected structure and \texttt{GLMNet(Poisson)} favored by local or directional effects.
Fifth, PLN is substantially more expensive than \texttt{GLMNet(Poisson)}, but the observed costs remain manageable for contemporary microbiome benchmark sizes.

\paragraph{Future directions.}
Several directions follow from the limitations of this study.
First, the benchmark draws from a finite set of public datasets; broader coverage of environments such as soil, marine, and host-associated systems, as well as diverse sequencing protocols, would strengthen the generalisability of the conclusions.
Second, all algorithms received the same \texttt{log1p}-transformed input~\citep{doi:10.1128/mSystems.00016-19} to ensure that performance differences reflected model structure rather than preprocessing.
We also explored fitting PLN on raw counts, but the resulting comparisons were not stable: improvements were inconsistent across datasets and outliers became more prominent.
Different normalisation or offset strategies may still shift relative performance and remain an important direction for future work.
Third, the comparison is restricted to algorithms in the Poisson family; extending the protocol to negative-binomial and zero-inflated models would give a more complete picture of the count-modelling landscape~\citep{10.1038/s41598-024-76513-8,10.1101/538579,10.1093/bioadv/vbae167,Champion2024OneNet}.

\section*{Declarations}

\subsection*{Ethics approval and consent to participate}
Not applicable.

\subsection*{Consent for publication}
Not applicable.

\subsection*{Availability of data and materials}
The datasets analysed in this study are publicly available; accession details and references are provided in Table~S2.
The benchmark code is available at \url{https://github.com/EngineerDanny/pln_eval}.

\subsection*{Competing interests}
The authors declare that they have no competing interests.

\subsection*{Funding}
This work was supported by the National Science Foundation grant \#2125088 (Rules of Life Program).

\subsection*{Authors' contributions}
DA conceived the study, implemented the benchmark pipeline, performed the analyses, and drafted the manuscript.
JC advised on methodology, algorithm design, and manuscript revisions.
TDH advised on methodology, algorithm design, and manuscript revisions.
JM contributed to writing, review, and editing of the manuscript.
All authors read and approved the final manuscript.

\subsection*{Acknowledgments}
Computational work was performed on Northern Arizona University's Monsoon high-performance computing cluster~\citep{Buechler2025MonsoonIntro}.

\subsection*{Authors' information}
Not applicable.

\bibliographystyle{plainnat}
\bibliography{references}

@article{Love2014DESeq2,
  author  = {Love, Michael I. and Huber, Wolfgang and Anders, Simon},
  title   = {Moderated estimation of fold change and dispersion for {RNA}-seq data with {DESeq2}},
  journal = {Genome Biology},
  volume  = {15},
  number  = {12},
  pages   = {550},
  year    = {2014},
}

@article{Robinson2010edgeR,
  author  = {Robinson, Mark D. and McCarthy, Davis J. and Smyth, Gordon K.},
  title   = {{edgeR}: a {Bioconductor} package for differential expression analysis of digital gene expression data},
  journal = {Bioinformatics},
  volume  = {26},
  number  = {1},
  pages   = {139--140},
  year    = {2010},
}

@article{Alkanani2015,
  author  = {Alkanani, Aya K. and Hara, Naoaki and Gottlieb, Peter A. and Ir, Delnatte and Robertson, Charles E. and Wagner, Brian D. and Frank, Daniel N. and Zipris, Danny},
  title   = {Alterations in Intestinal Microbiota Correlate with Susceptibility to Type 1 Diabetes},
  journal = {Diabetes},
  year    = {2015},
  volume  = {64},
  number  = {10},
  pages   = {3510--3520}
}

@article{CastroNallar2015,
  author  = {Castro-Nallar, Eduardo and Shen, Yuwei and Freishtat, Robert J. and Pérez-Losada, Marcos and Manimaran, Satheesh and Liu, Guojun and Johnson, William E. and Crandall, Keith A.},
  title   = {Integrating Microbial and Host Transcriptomics to Characterize Asthma-Associated Microbial Communities},
  journal = {BMC Medical Genomics},
  year    = {2015},
  volume  = {8},
  pages   = {50}
}

@article{Chng2016,
  author  = {Chng, Kern Rei and Tay, Angeline S. L. and Li, Chaorong and Ng, Ai Hui Qian and Wang, Jie and Suri, Basanth and Matta, Sachin Ahluwalia and McGovern, Naoko and Janela, Bruno and Wong, Xin Ying and Sio, Yik Ying and Au, Brenda and Wilm, Andreas and Nagarajan, Niranjan and Oon, Hazel H. and Ginhoux, Florent},
  title   = {Whole Metagenome Profiling Reveals Skin Microbiome-Dependent Susceptibility to Atopic Dermatitis Flare},
  journal = {Nature Microbiology},
  year    = {2016},
  volume  = {1},
  pages   = {16106}
}

@article{Costea2017,
  author  = {Costea, Paul I. and Hildebrand, Falk and Arumugam, Manimozhiyan and Bäckhed, Fredrik and Blaser, Martin J. and Bushman, Frederic D. and de Vos, Willem M. and Ehrlich, S. Dusko and Fraser, Claire M. and Hattori, Masahira and Huttenhower, Curtis and Jeffery, Ian B. and Knights, Dan and Lewis, James D. and Ley, Ruth E. and Ochman, Howard and O'Toole, Paul W. and Quince, Christopher and Relman, David A. and Shanahan, Fergus and Sunagawa, Shinichi and Wang, Jun and Weinstock, George M. and Wu, Gary D. and Zeller, Georg and Enterotypes Study Group and Bork, Peer},
  title   = {Enterotypes in the Landscape of Gut Microbial Community Composition},
  journal = {Nature Microbiology},
  year    = {2017},
  volume  = {3},
  pages   = {8--16}
}

@article{HMP2012,
  author  = {{Human Microbiome Project Consortium}},
  title   = {Structure, Function and Diversity of the Healthy Human Microbiome},
  journal = {Nature},
  year    = {2012},
  volume  = {486},
  number  = {7402},
  pages   = {207--214}
}

@article{Lozupone2013,
  author  = {Lozupone, Catherine A. and Li, Meng and Campbell, Tara B. and Flores, Sonia C. and Linderman, Daniel and Gebert, Michael J. and Knight, Rob and Fontenot, Andrew P. and Palmer, B. E.},
  title   = {Alterations in the Gut Microbiota Associated with {HIV}-1 Infection},
  journal = {Cell Host \& Microbe},
  year    = {2013},
  volume  = {14},
  number  = {3},
  pages   = {329--339}
}

@article{MBQC2017,
  author  = {Sinha, Rashmi and Abu-Ali, Gina and Vogtmann, Emily and Fodor, Anthony A. and Ren, Bo and Amir, Amnon and Schwager, Eric and Crabtree, Jacquelyn and Ma, Shun and The Microbiome Quality Control Project Consortium and others},
  title   = {Assessment of Variation in Microbial Community Amplicon Sequencing by the Microbiome Quality Control ({MBQC}) Project Consortium},
  journal = {Nature Biotechnology},
  year    = {2017},
  volume  = {35},
  number  = {11},
  pages   = {1077--1086}
}

@article{Vogtmann2016CRC,
  author  = {Vogtmann, Emily and Hua, Xin and Zeller, Georg and Sunagawa, Shinichi and Voigt, Anita Y. and Hercog, Rholig and Goedert, James J. and Shi, Jianxin and Bork, Peer and Sinha, Rashmi},
  title   = {Colorectal Cancer and the Human Gut Microbiome: Reproducibility with Whole-Genome Shotgun Sequencing},
  journal = {PLOS ONE},
  year    = {2016},
  volume  = {11},
  number  = {5},
  pages   = {e0155362},
  doi     = {10.1371/journal.pone.0155362}
}

@article{Weiss2022OMM12,
  author  = {Weiss, Anna S. and Burrichter, Anna G. and Durai Raj, Abilash Chakravarthy and von Strempel, Alexandra and Meng, Chen and others},
  title   = {In Vitro Interaction Network of a Synthetic Gut Bacterial Community},
  journal = {The ISME Journal},
  year    = {2022},
  volume  = {16},
  number  = {4},
  pages   = {1095--1109},
  doi     = {10.1038/s41396-021-01153-z}
}

@article{Weiss2023Keystone,
  author  = {Weiss, Anna S. and Niedermeier, Lisa S. and von Strempel, Alexandra and Burrichter, Anna G. and Ring, Diana and others},
  title   = {Nutritional and Host Environments Determine Community Ecology and Keystone Species in a Synthetic Gut Bacterial Community},
  journal = {Nature Communications},
  year    = {2023},
  volume  = {14},
  pages   = {4780},
  doi     = {10.1038/s41467-023-40372-0}
}

@article{Clark2021Butyrate,
  author  = {Clark, Ryan L. and Connors, Bryce M. and Stevenson, David M. and Hromada, Susan E. and Hamilton, Joshua J. and others},
  title   = {Design of Synthetic Human Gut Microbiome Assembly and Butyrate Production},
  journal = {Nature Communications},
  year    = {2021},
  volume  = {12},
  pages   = {3254},
  doi     = {10.1038/s41467-021-22938-y}
}

@article{Gould2018HostFitness,
  author  = {Gould, Alison L. and Zhang, Vivian and Lamberti, Lisa and Jones, Eric W. and Obadia, Benjamin and others},
  title   = {Microbiome Interactions Shape Host Fitness},
  journal = {Proceedings of the National Academy of Sciences},
  year    = {2018},
  volume  = {115},
  number  = {51},
  pages   = {E11951--E11960},
  doi     = {10.1073/pnas.1809349115}
}

@article{Zhu2025PairInteraX,
  author  = {Zhu, Jiaying and Jiang, Min-Zhi and Chen, Xue and Li, Min and Wang, Yu-Lin and others},
  title   = {Systematic Pairwise Co-Cultures Uncover Predominant Negative Interactions Among Human Gut Bacteria},
  journal = {Microbiome},
  year    = {2025},
  volume  = {13},
  pages   = {161},
  doi     = {10.1186/s40168-025-02156-0}
}

@article{McDonald2018,
  author  = {McDonald, Daniel and Hyde, Embriette and Debelius, Justine W. and Morton, James T. and Gonzalez, Antonio and Ackermann, Gail and others},
  title   = {American Gut: an Open Platform for Citizen Science Microbiome Research},
  journal = {mSystems},
  year    = {2018},
  volume  = {3},
  number  = {3},
  pages   = {e00031-18}
}

@article{Mehta2018,
  author  = {Mehta, Rishika S. and Abu-Ali, Gina S. and Drew, David A. and Lloyd-Price, Jason and Subramanian, Ashwin and Lochhead, Pius and Joshi, Asad D. and Ivey, Kristina L. and Khalili, Hamed and Brown, Gerald T. and DuLong, Caroline and Song, Minmin and Nguyen, Lu and Mallick, Himel and Rimm, Eric B. and Izard, Jacques and Huttenhower, Curtis and Chan, Andrew T.},
  title   = {Stability of the Human Faecal Microbiome in a Cohort of Adult Men},
  journal = {Nature Microbiology},
  year    = {2018},
  volume  = {3},
  number  = {3},
  pages   = {347--355}
}

@article{Ross2015,
  author  = {Ross, M. Caroline and Muzny, Donna M. and McCormick, J. B. and Gibbs, Richard A. and Fisher-Hoch, Susan P. and Petrosino, Joseph F.},
  title   = {16S Gut Community of the Cameron County Hispanic Cohort},
  journal = {Microbiome},
  year    = {2015},
  volume  = {3},
  pages   = {7}
}

@article{Schubert2014,
  author  = {Schubert, Alyxandria M. and Rogers, Mary A. M. and Ring, Cathrin and Mogle, Jillian and Petrosino, Joseph P. and Young, Vincent B. and Aronoff, David M. and Schloss, Patrick D.},
  title   = {Microbiome Data Distinguish Patients with {Clostridium difficile} Infection and Non-{C. difficile}-Associated Diarrhea from Healthy Controls},
  journal = {mBio},
  year    = {2014},
  volume  = {5},
  number  = {3},
  pages   = {e01021-14}
}

@article{Shao2019,
  author  = {Shao, Ying and Forster, Sam C. and Tsaliki, Evangelia and Vervier, Koen and Strang, Anna and Simpson, Nigel and Kumar, Nitin and Stares, Marta D. and Rodger, Adam and Brocklehurst, Peter and Field, Nick and Lawley, Trevor D.},
  title   = {Stunted Microbiota and Opportunistic Pathogen Colonization in Caesarean-Section Birth},
  journal = {Nature},
  year    = {2019},
  volume  = {574},
  number  = {7776},
  pages   = {117--121}
}

@article{Vatanen2016,
  author  = {Vatanen, Tommi and Kostic, Aleksandar D. and d'Hennezel, Emmanuel and Siljander, Heli and Franzosa, Eric A. and Yassour, Moran and Kolde, Raivo and Vlamakis, Hera and Arthur, Timothy D. and Hämäläinen, Anu-Maaria and Peet, Vallo and Tillmann, Vallo and Uibo, Raul and Mokurov, Sergei and Dorshakova, Natalia and Ilonen, Jorma and Virtanen, Suvi M. and Szabo, Steven J. and Porter, James A. and Lahdesmaki, Harri and Huttenhower, Curtis and Gevers, Dirk and Cullen, Thomas W. and Knip, Mikael and Xavier, Ramnik J. and others},
  title   = {Variation in Microbiome {LPS} Immunogenicity Contributes to Autoimmunity in Humans},
  journal = {Cell},
  year    = {2016},
  volume  = {165},
  number  = {4},
  pages   = {842--853}
}

@article{Wang2012,
  author  = {Wang, Tao and Cai, Guang and Qiu, Yue and Fei, Nanyi and Zhang, Ming and Pang, Xueya and Jia, Wenyi and Cai, Sumei and Zhao, Liping},
  title   = {Structural Segregation of Gut Microbiota between Colorectal Cancer Patients and Healthy Volunteers},
  journal = {The ISME Journal},
  year    = {2012},
  volume  = {6},
  number  = {2},
  pages   = {320--329}
}

@article{Wong2013,
  author  = {Wong, Vincent W.-S. and Tse, Chun-Hung and Lam, Timothy T.-Y. and Wong, Grace L.-H. and Chim, Albert M.-L. and Chu, William C.-W. and Yeung, Dora K.-W. and Law, Po-Tien and Kwan, Ho-Sum and Yu, Jun and Sung, Joseph J.-Y. and Chan, Henry L.-Y.},
  title   = {Molecular Characterization of the Fecal Microbiota in Patients with Nonalcoholic Steatohepatitis: a Longitudinal Study},
  journal = {PLoS ONE},
  year    = {2013},
  volume  = {8},
  number  = {4},
  pages   = {e62885}
}

@article{Zeevi2015,
  author  = {Zeevi, David and Korem, Tal and Zmora, Niv and Israeli, David and Rothschild, David and Weinberger, Adina and Ben-Yacov, Or and Lador, Dor and Avnit-Sagi, Tali and Lotan-Pompan, Michal and Suez, Jotham and Mahdi, Juman and Matot, Eran and Malka, Gil and Kosower, Noam and Rein, Michael and Zilberman-Schapira, Gili and Dohnalová, Lenka and Pevsner-Fischer, Miri and Bikovsky, Ron and Halpern, Ziv and Elinav, Eran and Segal, Eran},
  title   = {Personalized Nutrition by Prediction of Glycemic Responses},
  journal = {Cell},
  year    = {2015},
  volume  = {163},
  number  = {5},
  pages   = {1079--1094}
}

@article{Zeller2014,
  author  = {Zeller, Georg and Tap, Julien and Voigt, Anita Y. and Sunagawa, Shinichi and Kultima, Jens Roat and Costea, Paul I. and Amiot, Aurélien and Böhm, Jutta and Brunetti, Francesco and Habermann, Nicolas and Hercog, Romuald and Koch, Moritz and Luciani, Andrea and Mende, David R. and Schneider, Maike A. and Schrotz-King, Petra and Tournigand, Christophe and Tran Van Nhieu, Jim and Yamada, Takuji and Zimmermann, Jörg and Benes, Vladimir and Kloor, Matthias and Ulrich, Alexis and von Knebel Doeberitz, Magnus and Sobhani, Iradj and Bork, Peer},
  title   = {Potential of Fecal Microbiota for Early-Stage Detection of Colorectal Cancer},
  journal = {Molecular Systems Biology},
  year    = {2014},
  volume  = {10},
  pages   = {766}
}

@article{Zhu2013,
  author  = {Zhu, Lefei and Baker, Stephen S. and Gill, Christopher and Liu, Weiying and Alkhouri, Naim and Baker, Robert D. and Gill, Sandeep R.},
  title   = {Characterization of Gut Microbiomes in Nonalcoholic Steatohepatitis ({NASH}) Patients: a Connection between Endogenous Alcohol and {NASH}},
  journal = {Hepatology},
  year    = {2013},
  volume  = {57},
  number  = {2},
  pages   = {601--609}
}

@article{Thompson2017,
  title={A communal catalogue reveals Earth's multiscale microbial diversity},
  author={Thompson, Luke R and Sanders, Jon G and McDonald, Daniel and Amir, Amnon and Ladau, Joshua and Locey, Kenneth J and Prill, Robert J and Tripathi, Anupriya and Gibbons, Sean M and Ackermann, Gail and others},
  journal={Nature},
  volume={551},
  number={7681},
  pages={457--463},
  year={2017},
  publisher={Nature Publishing Group},
  doi={10.1038/nature24621}
}

@article{Huttenhower2012,
  title={A framework for human microbiome research},
  author={{Human Microbiome Project Consortium}},
  journal={Nature},
  volume={486},
  number={7402},
  pages={215--221},
  year={2012},
  publisher={Nature Publishing Group},
  doi={10.1038/nature11209}
}

@article{Chiquet2021,
  author  = {Chiquet, Julien and Mariadassou, Mahendra and Robin, St{\'e}phane},
  title   = {The Poisson--Lognormal Model as a Versatile Framework for the Joint Analysis of Species Abundances},
  journal = {Frontiers in Ecology and Evolution},
  volume  = {9},
  pages   = {588292},
  year    = {2021},
  doi     = {10.3389/fevo.2021.588292},
  url     = {https://www.frontiersin.org/articles/10.3389/fevo.2021.588292}
}

@article{SUBEDI20251255,
title = {Multivariate Poisson lognormal distribution for modeling counts from modern biological data: An overview},
journal = {Computational and Structural Biotechnology Journal},
volume = {27},
pages = {1255-1264},
year = {2025},
issn = {2001-0370},
doi = {https://doi.org/10.1016/j.csbj.2025.03.017},
url = {https://www.sciencedirect.com/science/article/pii/S2001037025000856},
author = {Sanjeena Subedi and Utkarsh J. Dang}
}

@article{Xia2023,
  author  = {Xia, Yinglin},
  title   = {Statistical Normalization Methods in Microbiome Data with Application to Microbiome Cancer Research},
  journal = {Gut Microbes},
  volume  = {15},
  number  = {2},
  pages   = {2244139},
  year    = {2023},
  doi     = {10.1080/19490976.2023.2244139}
}

@article{Kodikara2022,
  author  = {Kodikara, Saritha and Ellul, Susan and L{\^{e}} Cao, Kim-Anh},
  title   = {Statistical Challenges in Longitudinal Microbiome Data Analysis},
  journal = {Briefings in Bioinformatics},
  volume  = {23},
  number  = {4},
  pages   = {bbac273},
  year    = {2022},
  doi     = {10.1093/bib/bbac273}
}

@article{Qian2024,
  author  = {Qian, Weicheng and Stanley, Kevin G. and Aziz, Zohaib and Aziz, Umair and Siciliano, Steven D.},
  title   = {SPLANG—A Synthetic Poisson-Lognormal-Based Abundance and Network Generative Model for Microbial Interaction Inference Algorithms},
  journal = {Scientific Reports},
  volume  = {14},
  pages   = {25099},
  year    = {2024},
  doi     = {10.1038/s41598-024-76513-8}
}

@article{Friedman2010,
  author  = {Jerome H. Friedman and Trevor Hastie and Robert Tibshirani},
  title   = {Regularization Paths for Generalized Linear Models via Coordinate Descent},
  journal = {Journal of Statistical Software},
  year    = {2010},
  volume  = {33},
  number  = {1},
  pages   = {1--22},
  doi     = {10.18637/jss.v033.i01}
}

@article{Kenney2021PoissonPCA,
  author  = {Kenney, Toby and Gu, Hong and Huang, Tianshu},
  title   = {Poisson PCA: Poisson Measurement Error Corrected PCA, with Application to Microbiome Data},
  journal = {Biometrics},
  volume  = {77},
  number  = {4},
  pages   = {1369--1384},
  year    = {2021},
  doi     = {10.1111/biom.13384}
}

@inproceedings{Xiao2022PLNet,
  author    = {Xiao, Feiyi and Tang, Junjie and Fang, Huaying and Xi, Ruibin},
  title     = {Estimating Graphical Models for Count Data with Applications to Single-Cell Gene Network},
  booktitle = {Advances in Neural Information Processing Systems 35 (NeurIPS 2022)},
  year      = {2022}
}

@article{Virta2023PoissonPCA,
  author  = {Virta, Joni and Artemiou, Andreas},
  title   = {Poisson PCA for Matrix Count Data},
  journal = {Pattern Recognition},
  volume  = {138},
  pages   = {109401},
  year    = {2023},
  doi     = {10.1016/j.patcog.2023.109401}
}

@article{Batardiere2025ZIPLN,
  author  = {Batardi{\`e}re, Bastien and Chiquet, Julien and Gindraud, Fran{\c{c}}ois and Mariadassou, Mahendra},
  title   = {Zero-Inflation in the Multivariate Poisson-Lognormal Family},
  journal = {Statistics and Computing},
  year    = {2025},
  doi     = {10.1007/s11222-025-10729-0},
  note    = {Early Access}
}

@article{Chaussard2025PLNTree,
  author  = {Chaussard, Alexandre and Bonnet, Anna and Gassiat, Elisabeth and Le Corff, Sylvain},
  title   = {Tree-based Variational Inference for Poisson Log-Normal Models},
  journal = {Statistics and Computing},
  year    = {2025},
  doi     = {10.1007/s11222-025-10668-w},
  note    = {Early Access}
}

@article{aitchison1989multivariate,
  author  = {Aitchison, J. and Ho, C. H.},
  title   = {The multivariate Poisson-log normal distribution},
  journal = {Biometrika},
  year    = {1989},
  volume  = {76},
  number  = {4},
  pages   = {643--653},
  doi     = {10.1093/biomet/76.4.643}
}

@inproceedings{chiquet2019variational,
  author    = {Chiquet, Julien and Robin, Stephane and Mariadassou, Mahendra},
  title     = {Variational Inference for Sparse Network Reconstruction from Count Data},
  booktitle = {Proceedings of the 36th International Conference on Machine Learning},
  series    = {Proceedings of Machine Learning Research},
  volume    = {97},
  pages     = {1162--1171},
  year      = {2019},
  publisher = {PMLR},
  url       = {https://proceedings.mlr.press/v97/chiquet19a.html}
}

@article{zhang2017nbmm,
  title   = {Negative Binomial Mixed Models for Analyzing Microbiome Count Data},
  author  = {Zhang, Xinyan and Mallick, Himel and Tang, Zaixiang and Zhang, Lei and Cui, Xiangqin and Benson, Andrew K. and Yi, Nengjun},
  journal = {BMC Bioinformatics},
  volume  = {18},
  number  = {1},
  pages   = {4},
  year    = {2017},
  doi     = {10.1186/s12859-016-1441-7}
}

@article{mcmurdie2014waste,
  title   = {Waste Not, Want Not: Why Rarefying Microbiome Data Is Inadmissible},
  author  = {McMurdie, Paul J. and Holmes, Susan},
  journal = {PLOS Computational Biology},
  volume  = {10},
  number  = {4},
  pages   = {e1003531},
  year    = {2014},
  doi     = {10.1371/journal.pcbi.1003531}
}

@article{booeshaghi2021log1p,
  author  = {Booeshaghi, A. Sina and Pachter, Lior},
  title   = {Normalization of single-cell RNA-seq counts by log(x + 1) or log(1 + x)},
  journal = {Bioinformatics},
  year    = {2021},
  volume  = {37},
  number  = {15},
  pages   = {2223--2224},
  doi     = {10.1093/bioinformatics/btab085}
}

@article{doi:10.1128/mSystems.00016-19,
author = {Cameron Martino and James T. Morton and Clarisse A. Marotz and Luke R. Thompson and Anupriya Tripathi and Rob Knight and Karsten Zengler},
title = {A Novel Sparse Compositional Technique Reveals Microbial Perturbations},
journal = {mSystems},
volume = {4},
number = {1},
pages = {e00016-19},
year = {2019},
doi = {10.1128/mSystems.00016-19},
URL = {https://journals.asm.org/doi/abs/10.1128/mSystems.00016-19},
eprint = {https://journals.asm.org/doi/pdf/10.1128/mSystems.00016-19}
}

@article{10.1038/s41598-024-76513-8,
    author = {Wang, Yue and Xu, Jie and Zhang, Wei and Chen, Kun},
    title = "{SPLANG: A sparse learning approach for inferring gene regulatory networks}",
    journal = {Scientific Reports},
    volume = {14},
    pages = {1674},
    year = {2024},
    doi = {10.1038/s41598-024-76513-8}
}

@article{10.1101/538579,
    author = {Zhang, Chen and Liu, Yang and Wang, Jing and Zhao, Hong},
    title = "{MAGMA: Inference of Sparse Microbial Association Networks}",
    journal = {bioRxiv},
    year = {2019},
    doi = {10.1101/538579}
}

@article{10.1093/bioadv/vbae167,
    author = {Li, Hongzhe and Ma, Zongming and Liu, Han},
    title = "{MicroNet-MIMRF: Microbial network inference via mixed integer matrix recovery framework}",
    journal = {Bioinformatics Advances},
    volume = {4},
    number = {1},
    year = {2024},
    doi = {10.1093/bioadv/vbae167}
}

@misc{Buechler2025MonsoonIntro,
  author       = {Buechler, Jason},
  title        = {Intro to Monsoon and Slurm},
  year         = {2025},
  howpublished = {\url{https://rcdata.nau.edu/hpcpub/workshops/odintro.pdf}},
  urldate      = {2025-12-12}
}

@article{Champion2024OneNet,
  author  = {Champion, Camille and Momal, Rapha{\"e}lle and Le Chatelier, Emmanuelle and Sola, Mathilde and Mariadassou, Mahendra and Berland, Magali},
  title   = {OneNet---One network to rule them all: Consensus network inference from microbiome data},
  journal = {PLOS Computational Biology},
  year    = {2024},
  volume  = {20},
  number  = {12},
  pages   = {e1012627},
  doi     = {10.1371/journal.pcbi.1012627}
}

@article{Meinshausen2006,
  author  = {Meinshausen, Nicolai and B{\"u}hlmann, Peter},
  title   = {High-Dimensional Graphs and Variable Selection with the Lasso},
  journal = {The Annals of Statistics},
  year    = {2006},
  volume  = {34},
  number  = {3},
  pages   = {1436--1462}
}

@article{Ghaeli2025,
  author  = {Ghaeli, Zahra and Aghdam, Rosa and Eslahchi, Changiz},
  title   = {Evaluating Microbial Network Inference Methods: Moving Beyond Synthetic Data with Reproducibility-Driven Benchmarks},
  journal = {bioRxiv},
  year    = {2025},
  doi     = {10.1101/2025.07.05.663212},
  note    = {Preprint}
}

@article{Kurtz2015SPIECEASI,
  author  = {Kurtz, Zachary D and M{\"u}ller, Christian L and Miraldi, Emily R and Littman, Dan R and Blaser, Martin J and Bonneau, Richard A},
  title   = {Sparse and compositionally robust inference of microbial ecological networks},
  journal = {PLOS Computational Biology},
  volume  = {11},
  number  = {5},
  pages   = {e1004226},
  year    = {2015},
  doi     = {10.1371/journal.pcbi.1004226}
}

@article{Friedman2012SparCC,
  author  = {Friedman, Jonathan and Alm, Eric J},
  title   = {Inferring correlation networks from genomic survey data},
  journal = {PLOS Computational Biology},
  volume  = {8},
  number  = {9},
  pages   = {e1002687},
  year    = {2012},
  doi     = {10.1371/journal.pcbi.1002687}
}

@article{Gneiting2007ProperScoring,
  author  = {Gneiting, Tilmann and Raftery, Adrian E.},
  title   = {Strictly Proper Scoring Rules, Prediction, and Estimation},
  journal = {Journal of the American Statistical Association},
  volume  = {102},
  number  = {477},
  pages   = {359--378},
  year    = {2007},
  doi     = {10.1198/016214506000001437}
}

\clearpage
\appendix
\section*{Supplementary Material}

\section*{Appendix S1. LOTO-CV: Formal Definition and Theoretical Guarantees}

\subsection*{S1.1\quad Notation}

Let $Y \in \mathbb{Z}_{\geq 0}^{N \times D}$ denote the observed count matrix with $N$ samples and $D$ taxa.
Write $Y_{\cdot j} \in \mathbb{Z}_{\geq 0}^{N}$ for the $j$-th column (the count vector of taxon $j$ across all samples) and $Y_{\cdot,-j} \in \mathbb{Z}_{\geq 0}^{N \times (D-1)}$ for the matrix of all remaining taxa.
Let $\mathcal{I}_1,\ldots,\mathcal{I}_K$ denote a partition of the samples into $K$ folds.
Let $\mathcal{A}$ denote a count regression algorithm.
For each target taxon $j$ and sample fold $k$, the algorithm is trained on the samples outside $\mathcal{I}_k$ using taxon $j$ as the response and the remaining taxa as predictors.
Let $\hat{\mu}^{(-k,j)}_{ij} \in \mathbb{R}_{>0}$ denote the predicted mean for held-out sample $i \in \mathcal{I}_k$ and target taxon $j$.
The mean Poisson deviance between observed counts $y \in \mathbb{Z}_{\geq 0}^{m}$ and predicted means $\hat{\mu} \in \mathbb{R}_{>0}^{m}$ is
\[
  \mathcal{D}(y,\, \hat{\mu})
  \;=\;
  \frac{2}{m} \sum_{i=1}^{m} \left( y_i \log \frac{y_i}{\hat{\mu}_i} - (y_i - \hat{\mu}_i) \right),
\]
with the convention $0 \log 0 = 0$.

\subsection*{S1.2\quad The LOTO-CV Estimator}

The estimator reported in the paper is
\[
  \widehat{\mathcal{D}}_{\mathrm{LOTO}\text{-}\mathrm{CV}}(\mathcal{A})
  \;=\;
  \frac{1}{DK} \sum_{j=1}^{D}\sum_{k=1}^{K}
  \mathcal{D}\!\left(Y_{\mathcal{I}_k j},\; \hat{\mu}^{(-k,j)}_{\mathcal{A}}\right),
\]
where $Y_{\mathcal{I}_k j}$ is the vector of observed counts for target taxon $j$ on the held-out samples in fold $k$.
Each taxon serves once as the prediction target, and each sample serves once as a held-out observation within each taxon-specific regression problem.

\subsection*{S1.3\quad Poisson Deviance as a Proper Scoring Rule}

Proper scoring rules are loss functions minimised in expectation by the true conditional distribution \citep{Gneiting2007ProperScoring}.
We establish properness of the Poisson deviance via the Bregman divergence representation.
Let $\phi: \mathbb{R}_{\geq 0} \to \mathbb{R}$ be the strictly convex function $\phi(\mu) = \mu \log \mu - \mu$, extended continuously at zero so that $\phi(0)=0$.
The associated Bregman divergence is
\[
  B_{\phi}(y,\,\mu)
  \;=\;
  \phi(y) - \phi(\mu) - \phi'(\mu)(y - \mu)
  \;=\;
  y \log\frac{y}{\mu} - (y - \mu),
\]
so that for vectors of length $m$,
\[
  \mathcal{D}(y,\mu) = \frac{2}{m} \sum_i B_\phi(y_i, \mu_i).
\]

\begin{proposition}[Properness of Poisson deviance]
Let $Y$ be a non-negative integer-valued random variable with finite mean $\mu^* = \mathbb{E}[Y]$.
Then for all $\mu > 0$,
\[
  \mathbb{E}\!\left[B_\phi(Y,\,\mu^*)\right]
  \;\leq\;
  \mathbb{E}\!\left[B_\phi(Y,\,\mu)\right],
\]
with equality if and only if $\mu = \mu^*$.
\end{proposition}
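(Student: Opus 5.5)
The plan is to compare the two expected losses through their difference rather than evaluating each one separately. This avoids having to assume $\mathbb{E}[Y\log Y]<\infty$. The key observation is that the term $\phi(Y)=Y\log Y - Y$ in $B_\phi(Y,\mu)$ does not depend on the prediction $\mu$, so it cancels in the difference. For fixed $\mu>0$ and the case $\mu^*>0$, I would first write the pointwise identity
\[
  B_\phi(Y,\mu) - B_\phi(Y,\mu^*) \;=\; Y\log\frac{\mu^*}{\mu} \;+\; \mu - \mu^*,
\]
which follows by expanding both Bregman divergences and noting that the $Y\log Y$ and $-Y$ terms cancel. The right-hand side is affine in $Y$, so it is integrable because $\mathbb{E}[Y]=\mu^*<\infty$. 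Taking expectations then gives
\[
  \mathbb{E}\!\left[B_\phi(Y,\mu)\right] - \mathbb{E}\!\left[B_\phi(Y,\mu^*)\right]
  \;=\; \mu^*\log\frac{\mu^*}{\mu} - (\mu^*-\mu) \;=\; B_\phi(\mu^*,\mu).
\]
This is the usual Bregman ``three-point'' reduction: the excess expected loss equals the divergence between the true mean and the prediction.

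The second step is to show that $B_\phi(\mu^*,\mu)\ge 0$, with equality iff $\mu=\mu^*$. Here I would use the strict convexity of $\phi(t)=t\log t - t$ on $[0,\infty)$, which holds because $\phi''(t)=1/t>0$ for $t>0$. Since $B_\phi(\mu^*,\mu)$ is the gap between $\phi(\mu^*)$ and the tangent line of $\phi$ at $\mu$, it is nonnegative, and strict convexity makes it vanish only at $\mu^*=\mu$. Equivalently, one can apply $\log x\le x-1$ with $x=\mu/\mu^*$, where equality holds iff $x=1$.

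The degenerate case $\mu^*=0$ is handled separately. Then $Y=0$ almost surely, so $\mathbb{E}[B_\phi(Y,\mu^*)]=B_\phi(0,0)=0$ under the convention $0\log 0=0$. On the other hand $\mathbb{E}[B_\phi(Y,\mu)]=\mu>0$ for every $\mu>0$. So the inequality is strict for every admissible $\mu$, and the equality case $\mu=\mu^*$ is outside the range $\mu>0$; the statement holds in this case as well.

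The main obstacle is integrability. If $\mathbb{E}[Y\log Y]=\infty$, both expectations equal $+\infty$. Then the inequality holds trivially in the extended reals, but the ``equality iff'' clause must be read through the difference identity, not through the raw expectations. I would therefore state explicitly that the comparison is made via $\mathbb{E}[B_\phi(Y,\mu)-B_\phi(Y,\mu^*)]=B_\phi(\mu^*,\mu)$, which is always well defined under the finite-mean hypothesis. In the case $\mathbb{E}[Y\log Y]<\infty$, I would add that both expectations are finite and the statement holds literally as written.
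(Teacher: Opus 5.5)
Your proof is correct and follows the same route as the paper's: both reduce the excess expected loss $\mathbb{E}[B_\phi(Y,\mu)]-\mathbb{E}[B_\phi(Y,\mu^*)]$ to $B_\phi(\mu^*,\mu)$ and conclude by strict convexity of $\phi$. The only difference is the order of steps, and yours is the more careful one. You cancel $\phi(Y)$ pointwise before taking expectations and treat $\mu^*=0$ separately, whereas the paper subtracts $\mathbb{E}[\phi(Y)]$ after taking expectations; that tacitly requires $\mathbb{E}[Y\log Y]<\infty$ (without it both expected losses are $+\infty$ for every $\mu$, so the literal equality clause fails, exactly as you note) and $\phi'(\mu^*)$ finite, i.e.\ $\mu^*>0$.
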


\begin{proof}
By linearity of expectation and the definition of Bregman divergence,
\[
  \mathbb{E}\!\left[B_\phi(Y,\,\mu)\right]
  \;=\;
  \mathbb{E}[\phi(Y)] - \phi(\mu) - \phi'(\mu)\bigl(\mu^* - \mu\bigr).
\]
Similarly,
\[
  \mathbb{E}\!\left[B_\phi(Y,\,\mu^*)\right]
  \;=\;
  \mathbb{E}[\phi(Y)] - \phi(\mu^*).
\]
Subtracting and using the definition of $B_\phi$,
\[
  \mathbb{E}\!\left[B_\phi(Y,\,\mu)\right]
  - \mathbb{E}\!\left[B_\phi(Y,\,\mu^*)\right]
  \;=\;
  \phi(\mu^*) - \phi(\mu) - \phi'(\mu)\bigl(\mu^* - \mu\bigr)
  \;=\;
  B_\phi(\mu^*,\,\mu)
  \;\geq\; 0,
\]
where the inequality holds because $\phi$ is convex, so $B_\phi \geq 0$ everywhere, and equality $B_\phi(\mu^*, \mu) = 0$ holds if and only if $\mu = \mu^*$ by strict convexity of $\phi$.
\end{proof}

\noindent The Proposition states that no predicted mean $\mu \neq \mu^*$ can achieve lower expected deviance than the true conditional mean.
A model with lower held-out Poisson deviance therefore predicts better in a statistically meaningful sense, not merely in a sample-specific sense.
This is the guarantee that in-sample criteria such as ELBO and BIC do not provide.

\subsection*{S1.4\quad Population Target and the Exchangeability Assumption}

The population quantity that $\widehat{\mathcal{D}}_{\mathrm{LOTO}\text{-}\mathrm{CV}}$ estimates is
\[
  R(\mathcal{A})
  \;=\;
  \mathbb{E}\!\left[\,\mathcal{D}\!\left(Y_{\mathcal{I}_k j},\; \hat{\mu}^{(-k,j)}_{\mathcal{A}}\right)\right],
\]
where the expectation is over new realisations of the count matrix together with the random choice of target taxon $j$ and sample fold $k$.
Under \emph{taxon exchangeability} --- the assumption that the $D$ columns of $Y$ are exchangeable draws from the same marginal distribution --- $\widehat{\mathcal{D}}_{\mathrm{LOTO}\text{-}\mathrm{CV}}$ is an approximately unbiased estimator of $R(\mathcal{A})$, and $\mathcal{A}$ is selected by minimising this estimate.

In real microbiome data, taxa are not exchangeable: they span different phyla, have different mean abundances, and exhibit different sparsity profiles.
Taxon exchangeability is therefore an idealisation.
However, any bias this introduces applies equally to all algorithms being compared, so it does not affect the validity of the relative comparison between PLN and \texttt{GLMNet(Poisson)}.

\subsection*{S1.5\quad Outer $K$-fold Approximation}

For computational tractability, the paper uses $K = 3$ outer folds over samples.
For each target taxon $j$, the algorithm is trained on samples outside $\mathcal{I}_k$ and evaluated on held-out samples in $\mathcal{I}_k$.
This produces one held-out score for each target-taxon and sample-fold pair.
The reported benchmark score is the average of these $D \times K$ scores.

\clearpage
\section*{Table S2. Count-prediction dataset inventory}
Datasets are sorted by $N/D$.
$^\dagger$American Gut~1 is an outlier in the count-prediction benchmark (see Table~S4).
\footnotesize
\begin{center}
\begin{tabular}{>{\raggedright\arraybackslash}p{0.19\textwidth}>{\raggedright\arraybackslash}p{0.24\textwidth}>{\raggedright\arraybackslash}p{0.20\textwidth}rrr}
\toprule
Dataset & Reference & Human system & $N$ & $D$ & Sparsity (\%) \\
\midrule
Castro-Nallar 2015 & \citealt{CastroNallar2015} & oral & 32 & 107 & 60.28 \\
ChngKR 2016 & \citealt{Chng2016} & skin & 78 & 211 & 80.66 \\
HMP 2012 & \citealt{HMP2012} & nasal & 91 & 222 & 92.72 \\
Zeller CRC & \citealt{Zeller2014} & stool CRC & 129 & 257 & 58.29 \\
Ross obesity & \citealt{Ross2015} & gut obesity & 63 & 123 & 67.05 \\
Lozupone HIV & \citealt{Lozupone2013} & gut HIV & 55 & 62 & 65.75 \\
Baker NASH/obesity & \citealt{Zhu2013} & gut NASH/obesity & 64 & 55 & 56.85 \\
Zhao CRC & \citealt{Wang2012} & stool CRC & 102 & 78 & 77.48 \\
CosteaPI 2017 & \citealt{Costea2017} & stool enterotypes & 279 & 179 & 71.73 \\
American Gut 2 & \citealt{McDonald2018} & gut citizen science & 294 & 138 & 34.48 \\
Chan NASH & \citealt{Wong2013} & gut NASH & 54 & 24 & 63.12 \\
American Gut 1$^\dagger$ & \citealt{McDonald2018} & gut citizen science & 289 & 127 & 30.64 \\
Alkanani T1D & \citealt{Alkanani2015} & gut T1D & 112 & 27 & 66.34 \\
Schubert CDI & \citealt{Schubert2014} & stool CDI & 347 & 79 & 79.35 \\
MehtaRS 2018 & \citealt{Mehta2018} & gut longitudinal & 928 & 183 & 75.72 \\
ShaoY 2019 & \citealt{Shao2019} & infant gut & 1644 & 244 & 93.01 \\
Zeevi obesity & \citealt{Zeevi2015} & gut glycemic response & 820 & 74 & 74.99 \\
HMP16S v13 & \citealt{Huttenhower2012} & multi-site 16S & 2909 & 159 & 87.20 \\
Diabimmune Karelia & \citealt{Vatanen2016} & infant gut & 1584 & 55 & 53.36 \\
MBQC integrated OTUs & \citealt{MBQC2017} & stool benchmark & 18270 & 235 & 89.46 \\
\bottomrule
\end{tabular}
\end{center}
\normalsize

\section*{Table S3. Network-inference dataset metadata}
\scriptsize
\setlength{\tabcolsep}{2.5pt}
\begin{center}
\begin{tabular}{>{\raggedright\arraybackslash}p{0.14\textwidth}>{\raggedright\arraybackslash}p{0.15\textwidth}>{\raggedright\arraybackslash}p{0.14\textwidth}>{\raggedright\arraybackslash}p{0.21\textwidth}rrr}
\toprule
Dataset & Reference & Human system & Truth type & $N$ & $D$ & Sparsity (\%) \\
\midrule
OMM12 & \citealt{Weiss2022OMM12} & defined human gut bacterial community & experimental coculture and dropout interactions & 109 & 12 & 21.25 \\
OMM12 keystone 2023 & \citealt{Weiss2023Keystone} & defined human gut bacterial community & experimental dropout-derived abundance-ratio effects & 228 & 11 & 31.50 \\
PairInteraX & \citealt{Zhu2025PairInteraX} & human gut species panel & experimental pairwise co-culture interaction labels & 169 & 96 & 25.86 \\
Butyrate assembly 2021 & \citealt{Clark2021Butyrate} & defined human gut community & pair-vs-singleton abundance shifts & 537 & 26 & 66.19 \\
Host fitness 2018 & \citealt{Gould2018HostFitness} & \textit{Drosophila} gut community & pair-vs-singleton CFU shifts & 1449 & 5 & 54.56 \\
\bottomrule
\end{tabular}
\end{center}
\setlength{\tabcolsep}{6pt}
\normalsize

\section*{Table S4. 20-dataset count-prediction results}
Datasets are sorted by $N/D$.
Blue rows are the four strongest PLN-favorable contrasts; orange rows are the four strongest GLMNet-favorable contrasts.
GLMNet and PLN entries report held-out Poisson deviance as mean $\pm$ SE.
$p$-values are from a paired Wilcoxon signed-rank test on per-fold deviance differences; $q$-values are Benjamini--Hochberg adjusted.
$^\dagger$American Gut~1 is an outlier: GLMNet wins despite low $N/D$ and relatively high MAC.
\footnotesize
\setlength{\tabcolsep}{2.7pt}
\setlength{\LTleft}{0pt}
\setlength{\LTright}{0pt}
\begin{longtable}{p{0.20\textwidth}rrp{0.155\textwidth}p{0.155\textwidth}rrrp{0.08\textwidth}}
\toprule
Dataset & $N/D$ & MAC & GLMNet & PLN & $\Delta$ & $p$-value & $q$-value & Winner \\
\midrule
\endfirsthead
\toprule
Dataset & $N/D$ & MAC & GLMNet & PLN & $\Delta$ & $p$-value & $q$-value & Winner \\
\midrule
\endhead
Castro-Nallar 2015 & 0.30 & 0.167 & 0.4857 $\pm$ 0.0695 & 0.3538 $\pm$ 0.0248 & +27.2\% & 0.020 & 0.024 & PLN \\
ChngKR 2016 & 0.37 & 0.092 & 0.2139 $\pm$ 0.0152 & 0.1793 $\pm$ 0.0104 & +16.2\% & 5.3e-4 & 8.8e-4 & PLN \\
\rowcolor{blue!8} HMP 2012 & 0.41 & 0.118 & 0.3365 $\pm$ 0.0433 & 0.2075 $\pm$ 0.0214 & +38.3\% & 2.9e-4 & 5.3e-4 & PLN \\
Zeller CRC & 0.50 & 0.090 & 1.0870 $\pm$ 0.0559 & 0.9439 $\pm$ 0.0448 & +13.2\% & 1.4e-4 & 3.4e-4 & PLN \\
\rowcolor{blue!8} Ross obesity & 0.51 & 0.130 & 1.2938 $\pm$ 0.0885 & 0.9046 $\pm$ 0.0422 & +30.1\% & 1.8e-6 & 5.0e-6 & PLN \\
\rowcolor{blue!8} Lozupone HIV & 0.89 & 0.163 & 1.3591 $\pm$ 0.1128 & 0.9587 $\pm$ 0.0824 & +29.5\% & 2.0e-4 & 4.1e-4 & PLN \\
\rowcolor{blue!8} Baker NASH/obesity & 1.16 & 0.156 & 1.3292 $\pm$ 0.1412 & 0.9337 $\pm$ 0.0571 & +29.8\% & 0.001 & 0.002 & PLN \\
Zhao CRC & 1.31 & 0.091 & 0.7487 $\pm$ 0.0498 & 0.6205 $\pm$ 0.0295 & +17.1\% & 0.002 & 0.002 & PLN \\
\rowcolor{orange!10} CosteaPI 2017 & 1.56 & 0.055 & 0.1974 $\pm$ 0.0144 & 0.2241 $\pm$ 0.0152 & $-$13.5\% & 3.9e-13 & 1.6e-12 & GLMNet \\
American Gut 2 & 2.13 & 0.237 & 1.3108 $\pm$ 0.0321 & 1.1553 $\pm$ 0.0284 & +11.9\% & 3.5e-31 & 3.5e-30 & PLN \\
Chan NASH & 2.25 & 0.112 & 1.5787 $\pm$ 0.1904 & 1.2359 $\pm$ 0.1291 & +21.7\% & 0.035 & 0.038 & PLN \\
American Gut 1$^\dagger$ & 2.28 & 0.182 & 1.2049 $\pm$ 0.0369 & 1.2761 $\pm$ 0.0330 & $-$5.9\% & 9.4e-13 & 3.1e-12 & GLMNet \\
Alkanani T1D & 4.15 & 0.231 & 1.3291 $\pm$ 0.1450 & 1.0756 $\pm$ 0.1167 & +19.1\% & 0.004 & 0.005 & PLN \\
Schubert CDI & 4.39 & 0.074 & 0.7711 $\pm$ 0.0476 & 0.6905 $\pm$ 0.0449 & +10.5\% & 1.5e-4 & 3.4e-4 & PLN \\
\rowcolor{orange!10} MehtaRS 2018 & 5.07 & 0.042 & 0.1786 $\pm$ 0.0113 & 0.2225 $\pm$ 0.0140 & $-$24.6\% & 1.6e-23 & 8.0e-23 & GLMNet \\
ShaoY 2019 & 6.74 & 0.033 & 0.1571 $\pm$ 0.0124 & 0.1647 $\pm$ 0.0136 & $-$4.8\% & 0.002 & 0.003 & GLMNet \\
Zeevi obesity & 11.08 & 0.110 & 0.5570 $\pm$ 0.0334 & 0.5628 $\pm$ 0.0326 & $-$1.1\% & 0.424 & 0.424 & GLMNet \\
HMP16S v13 & 18.30 & 0.058 & 0.3307 $\pm$ 0.0180 & 0.3347 $\pm$ 0.0191 & $-$1.2\% & 0.284 & 0.299 & GLMNet \\
\rowcolor{orange!10} Diabimmune Karelia & 28.80 & 0.127 & 1.1932 $\pm$ 0.0543 & 1.2794 $\pm$ 0.0564 & $-$7.2\% & 5.6e-26 & 3.7e-25 & GLMNet \\
\rowcolor{orange!10} MBQC integrated OTUs & 77.74 & 0.041 & 0.2337 $\pm$ 0.0127 & 0.2996 $\pm$ 0.0165 & $-$28.2\% & 2.6e-34 & 5.3e-33 & GLMNet \\
\bottomrule
\end{longtable}
\setlength{\LTleft}{0pt}
\setlength{\LTright}{0pt}
\normalsize

\section*{Figure S2. Host fitness 2018 network comparison}
\begin{center}
\includegraphics[width=\textwidth]{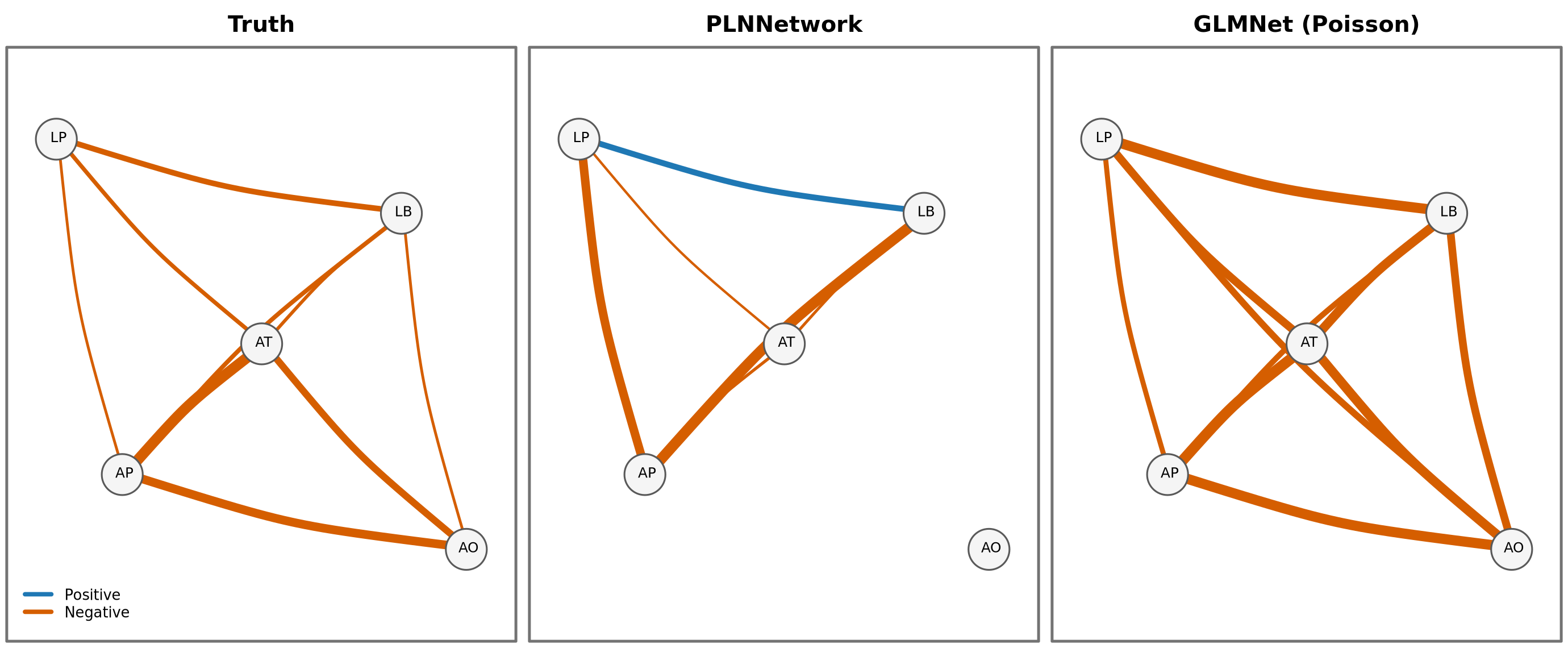}
\end{center}
Ground-truth and inferred networks for the host fitness 2018 dataset~\citep{Gould2018HostFitness}, a five-species \textit{Drosophila} gut community.
Each panel shows the experimental ground truth (left), the PLNNetwork prediction (centre), and the \texttt{GLMNet(Poisson)} prediction (right).
Node labels are species abbreviations: LP (\textit{Lactobacillus plantarum}), LB (\textit{Lactobacillus brevis}), AP (\textit{Acetobacter pasteurianus}), AT (\textit{Acetobacter tropicalis}), AO (\textit{Acetobacter orientalis}).
Edge colour indicates interaction sign (blue: positive, orange: negative).
Ground-truth edges are derived from pair-vs-singleton colony-forming unit shifts~\citep{Gould2018HostFitness}.

\end{document}